\newtheorem{theorem}{Theorem}
\newtheorem{definition}{Definition}%[theorem]
\newtheorem{example}{Example}
\newcommand{\fod}{FO(\(\cdot\))}
\newcommand{\fodot}{\fod{}}
\newcommand{\asp}[1]{\dot{#1}}
\newcommand{\aspcore}{ASP-Core-2}
\DeclareMathOperator{\lif}{{:}{-}{~}}
\newcommand{\aspnot}{\mathit{not}~}
\newcommand{\aspcard}{\#\mathit{count}}
\newcommand{\theo}{\mathit{Th}}
\newcommand{\defarrow}{\gets}
\newcommand{\country}{\mathit{country}}
\newcommand{\Country}{\mathit{Country}}
\newcommand{\ccolor}{\mathit{color}}
\newcommand{\Color}{\mathit{Color}}
\newcommand{\border}{\mathit{border}}
\newcommand{\Border}{\mathit{Border}}
\newcommand{\symBorder}{\mathit{symBorder}}
\newcommand{\SymBorder}{\mathit{SymBorder}}
\newcommand{\colorOf}{\mathit{colorOf}}
\newcommand{\ColorOf}{\mathit{ColorOf}}
\renewcommand{\phi}{\varphi}
\newcommand{\folasp}{\textsc{FOLASP}}
\newcommand{\idpthree}{\textsc{IDP}}
\newcommand{\idp}{\idpthree{}}
\newcommand{\clingo}{\textsc{Clingo}}
\newcommand{\clasp}{\clingo{}}
\newcommand{\gitcommit}[1]{\StrLeft{#1}{8}\xspace}
\newcommand{\instance}[1]{\textit{#1}\xspace}
\newcommand{\typeguard}[1]{\asp{T}(\asp{\vec{#1}})}
\begin{document}

\title[FOLASP]{\folasp{}: \fod\ as Input Language for Answer Set Solvers}
			
% \begin{authgrp}
% \author{\gn{Kylian} \sn{Van Dessel}}
% \author{\gn{Jo} \sn{Devriendt}}
% \author{\gn{Joost} \sn{Vennekens}}
% \affiliation{KU Leuven, Dept. of Computer Science, De Nayer Campus, 
%             Sint-Katelijne-Waver, Belgium\\
% 			Leuven.AI -- KU Leuven Institute for AI, 
% 			Leuven, Belgium}
% \end{authgrp}

\author[K. Van Dessel, J. Devriendt and J. Vennekens]
         {KYLIAN VAN DESSEL, JO DEVRIENDT \and JOOST VENNEKENS\\
         KU Leuven, Dept. of Computer Science, De Nayer Campus, Sint-Katelijne-Waver, Belgium\\
         Leuven.AI -- KU Leuven Institute for AI, Leuven, Belgium\\
         \email{firstname.lastname@kuleuven.be}}

\maketitle

% ==============================================================================

\abstract

Over the past decades, Answer Set Programming (ASP) has emerged as an important paradigm for declarative problem solving. Technological progress in this area has been stimulated by the use of common standards, such as the \aspcore{} language. While ASP has its roots in non-monotonic reasoning, efforts have also been made to reconcile ASP with classical first-order logic (FO). This has resulted in the development of \fod{}, an expressive extension of FO, which allows ASP-like problem solving in a purely classical setting. This language may be more accessible to domain experts already familiar with FO, and may be easier to combine with other formalisms that are based on classical logic. It is supported by the IDP inference system, which has successfully competed in a number of ASP competitions. Here, however, technological progress has been hampered by the limited number of systems that are available for \fod{}. In this paper, we aim to address this gap by means of a translation tool that transforms an \fod{} specification into \aspcore{}, thereby allowing \aspcore{} solvers to be used as solvers for \fod{} as well. We present experimental results to show that the resulting combination of our translation with an off-the-shelf ASP solver is competitive with the IDP system as a way of solving problems formulated in \fod{}. 

Under consideration for acceptance in TPLP.

% ==============================================================================

\section{Introduction}
\label{sec:intro}

Answer set programming (ASP) is a knowledge representation (KR) paradigm in which a declarative language is used to model and solve combinatorial (optimization) problems~\cite{marek99stable}. It is supported by performant solvers~\cite{tplp/GebserMR20}, such as Clingo~\cite{iclp/GebserKKOSW16} and the DLV system~\cite{tocl/LeonePFEGPS06}. Development and use of these solvers has been simplified and encouraged by the emergence of the unified \aspcore{} standard~\cite{tplp/CalimeriFGIKKLM20}.

The roots of ASP lie in the area of non-monotonic reasoning and its semantics is defined in a non-classical way. Work by Denecker et al.~\cite{tocl/DeneckerT08} has attempted to integrate key ideas from ASP with classical first-order logic (FO), in an effort to clarify ASP's contributions from a knowledge representation perspective. This has resulted in the development of the language \fodot{}, pronounced ``ef-oh-dot'', which is a conservative extension of FO. This language may be easier to use for domain experts who are already familiar with FO than ASP, and can seamlessly be combined with other monotonic logics. A number of systems, such as the IDP system~\cite{WarrenBook/DeCatBBD14} and Enfragmo~\cite{phd/Aavani14}, already support the \fodot{} language. However, when compared to the variety of solvers for \aspcore, the support for \fodot{} is still rather limited. This hinders technological progress, both in terms of solver development and the development of applications.

In this paper we present \emph{\folasp{}}, a tool that translates \fodot{} to \aspcore{}, thereby allowing each solver that supports \aspcore{} to handle \fodot{} as well. In this way, we significantly extend the range of solvers that is available for \fodot{}.

We believe that this tool will make the \fod{} language more accessible and useful for practical applications, while also helping to drive technological progress. To develop \folasp{}, we build on fundamental results about the relation between ASP and \fod{}~\cite{jelia/MarienGD04,DLTV12Tarskian}, which we have for the first time combined into a working tool.

% ==============================================================================

\section{Preliminaries}

\subsection{\fod{}}
\label{sec:preidp}

\fod{} is an extension of classical typed first-order logic with aggregates, arithmetic, and (inductive) definitions.
To maximize clarity, we will consider only a core subset of \fodot{}: typed FO extended with definitions, cardinality aggregates, and comparison operators.

% ------------------------------------------------------------------------------

A vocabulary \(V\) consists of a set of types $T$, predicates $P$ and function symbols $F$. 
% We use slash notation $P/n$ and $F/n$ to denote the arity $n$ of a predicate or function symbol.
Each predicate \(P/n\) with arity $n$ has an associated typing \(\tau(P)=(T_1,\ldots,T_n)\), as has each function symbol $F/n$: \(\tau(F) = (T_1,\ldots,T_{n+1})\).
%This typing specifies that predicate \(P\) has arity \(n\), and that the \(i\)-th argument is of type \(T_i\). 
%Analogously, the typing \(\tau(F) = (T_1,\ldots,T_{n+1})\) indicates that the function symbol \(F\) has arity \(n\), the \(i\)-th argument is of type \(T_i\), and that the co-domain of \(F\) is of type \(T_{n+1}\). 
% A \emph{constant} is a nillary function symbol, i.e., a constant takes zero arguments.

%The predicate \(Border\) represents a map as a binary relation between countries. A coloring of the map is given by the function symbol \(ColorOf\), mapping each country to a color.

% ------------------------------------------------------------------------------

%\paragraph{}
A structure \(S\) for a vocabulary \(V\) (also known as a $V$-structure) consists of a \emph{domain} \(D\) and an appropriate \emph{interpretation} \(\sigma^S\) for each symbol \(\sigma \in V\).
The interpretation \(T^S\) of a type \(T\) is a subset of $D$, the interpretation \(P^S\) of a predicate $P$ with \(\tau(P) = (T_1,\ldots,T_n)\) is a relation $P^S \in T_1^S \times \cdots \times T_n^S$, and the interpretation \(F^S\) of a function symbol $F$ with \(\tau(F) = (T_1,\ldots,T_n, T_{n+1})\) is a function from \(T_1^S \times \cdots \times T_n^S\) to \(T_{n+1}^S\). 
The interpretations \(T_i^S\) of the types \(T_i \in V\) partition the domain \(D\), i.e., \(\bigcup_i T_i^S = D\) and \(T_i^S \cap T_j^S = \emptyset \) for \(i \neq j\).

% ------------------------------------------------------------------------------

%\paragraph{}
%A theory $\theo$ over a vocabulary \(V\) consists of \emph{sentences} and \emph{definitions}.
%
% A sentence is a \emph{formula} without any free variables. 
% A formula \(\varphi\) is defined as follows: 
% \begin{itemize}
%   \item a \emph{term} is a variable, a domain element, or a function symbol application \(F(t_1, \ldots, t_n)\) with \(t_i\) terms.
%   \item an \emph{atom} is an expression $t_1=t_2$, $t_1\neq t_2$, or a predicate application \(P(t_1,\ldots,t_n)\), with $t_i$ terms.
%   \item a formula is either an atom or a combination of formulas using standard inductive rules for \(\land, \lor, \neg, \Rightarrow, \Leftrightarrow, \forall, \exists\).
% \end{itemize}
% A free variable is a variable $x$ that does not occur inside the scope of a quantification \(\forall x\ \varphi\ \) or \(\exists x\ \varphi\ \).
%
A \emph{term} is either a variable, an integer, a function $f(\vec{t})$ applied to a tuple of terms $\vec{t}$, or a cardinality expression of the form $\# \{ \vec{x} \colon \varphi(\vec{x}) \}$, which intuitively represents the number of $\vec{x}$'s for which $\phi(\vec{x})$ holds. 
Note that cardinality expressions are a special case of a aggregate expressions, which sometimes are introduced as generalized quantifiers working as atoms.
Here, we use the FO terminology, where a cardinality expression is a term.

We use the notion of a \emph{simple term} to refer to a variable or an integer.
An \emph{atom} is either a predicate $P(\vec{t})$ applied to a tuple of terms or a comparison $t_1 \bowtie t_2$ between two terms, with ${\bowtie}\in\{=,\neq,\leq,\geq,<,>\}$.
As usual, \emph{formulas} are constructed by means of the standard FO connectives $\lnot, \lor, \land, \Rightarrow, \Leftrightarrow, \exists, \forall$. Only well-typed formulas are allowed. A \emph{sentence} is a formula without free variables.
A \emph{positive literal} is an atom, a \emph{negative literal} a negated atom.

As in FO, an \fod{} theory can be a set of sentences. However, in addition to sentences, \fod{} also allows \emph{definitions}. Such a definition is a set of rules of the form:
\[
  \forall x_1, \ldots, x_n\colon P(x_1, \ldots, x_n) \defarrow \varphi(x_1, \ldots, x_n).
\]
where \(P/n\) is a predicate symbol, \(x_1,\ldots,x_n\) variables, and \(\varphi\) a formula.
The atom \(P(x_1,\ldots,x_n)\) is the \emph{head} of the rule, while \(\varphi\) is the \emph{body}.  The purpose of such a definition is to define the predicates that appear in the heads of the rules in terms of the predicates that appear only in the body. The first kind of predicates are called the \emph{defined predicates} $Def(\Delta)$ of the definition $\Delta$, while the second are called its \emph{open predicates} $Open(\Delta)$.

The formal semantics of these definitions is given by a parametrized variant of the well-founded semantics~\cite{pods/GelderRS88}. In order for a definition to be valid in \fod{}, it must be such that it uniquely determines a single interpretation for the defined predicates, given any interpretation for the open predicates. Formally, the condition is imposed on definitions that their well-founded model must exist and always be two-valued, no matter what the interpretation for their open predicates might be.

%\joost{Dit is beperkter dan wat \fod{} toelaat. Bewuste keuze?}
%\jod{Ja. Voorlopig zat nog geen uitleg rond unnesting uit de head of wat te doen met gedefinieerde functies in de paper, dus in de plaats heb ik het formalisme versimpeld. Nu, het is waarschijnlijk ook niet zo moeilijk uit te leggen. Laat maar weten als je denkt dat het beter is om wat rijkere definities toe te laten.}

% We finally note that expressions in an \fod{} theory should be well-typed, with the type of terms corresponding to the typing of the symbols over those terms.

%\begin{example}
%The following example illustrates the IDP notation for such a definition:
%\begin{lstlisting}
%  {
%    $\forall$c_1,c_2: Borders(c_1, c_2) $\defarrow$ Border(c_1,c_2).
%    $\forall$c_1,c_2: Borders(c_1, c_2) $\defarrow$ Border(c_2,c_1).
%  }
%\end{lstlisting}
%\begin{lstlisting}
%  {
%  $\forall$c_1[Country],c_2[Country]: Borders(c_1, c_2) $\defarrow$ Border(c_1,c_2).
%  $\forall$c_1[Country],c_2[Country]: Borders(c_1, c_2) $\defarrow$ Border(c_2,c_1).
%  }
%\end{lstlisting}
%Here, \emph{Borders} is defined as the symmetric closure of \emph{Border}.
%For \(Border = \{ nl, be;\ be, lux \}\), the predicate \emph{Borders} is interpreted as \(Borders = \{ nl, be;\ be,nl;\ be, lux;\  lux, be \}\)
%\end{example}

%\paragraph{}
%An \fod\ specification \((V,S,\theo)\) consisting of a vocabulary \(V\), a structure \(S\) over \(V\), and a theory \(\theo\) over \(V\), can then be used to perform different logical inference tasks. 
Different logical inference tasks can be considered for \fod{}.
In this paper, we focus on the most common task, namely that of \emph{model expansion}.
%, which computes the set of all structures \(S_0\) that extend \(S\) such that \(S_0 \models T\).

\begin{definition}
\label{def:mx}
Let \(\theo\) be a theory over vocabulary \(V\) and \(S\) a structure for a subvocabulary \(Voc(S) \subseteq V\). The \emph{model expansion} problem \(MX(V,S,\theo)\) is the problem of computing a \(V\)-structure \(S' \supseteq S\) such that \(S' \models \theo\).
\end{definition}

\begin{example}
\label{ex:vocabulary}
The following example models the well-known graph coloring problem as a model expansion problem $MX(V,S,\theo)$, with an illustration of a definition for the symmetric closure of the border relation.
\begin{align*}
V \colon & \mathit{type~Country},~ \mathit{type~Color},\\ 
& \mathit{predicate~Border~with~} \tau(\Border)=(\Country,\Country), \\
& \mathit{predicate~SymBorder~with~} \tau(\SymBorder)=(\Country,\Country), \\
& \mathit{function~symbol~ColorOf~with~typing~} \tau(\ColorOf)=(\Country,\Color) \\
S \colon
& \Country^S =\{ be, nl, lux \} \\
& \Color^S = \{ red, blue \} \\
& \Border^S = \{ (nl,be), (be,lux) \}  \\
\theo \colon
    & \forall c_1,c_2\colon \Border(c_1,c_2) \Rightarrow \ColorOf(c_1) \neq \ColorOf(c_2) \\
  & \left\{ 
    \begin{array}{l}
     \forall c_1,c_2\colon \SymBorder(c_1,c_2) \defarrow \Border(c_1,c_2). \\
     \forall c_1,c_2\colon \SymBorder(c_1,c_2) \defarrow \SymBorder(c_2,c_1). 
     \end{array}
     \right\}
\end{align*}

One solution to $MX(V,S,\theo)$ is 
\begin{align*}
S' \colon
& \Country^{S'} = \Country^{S}, \Color^{S'} = \Color^{S}, \Border^{S'} = \Border^{S} \\
& \SymBorder^{S'} = \{ (nl,be), (be,nl), (be,lux), (lux,be) \} \\
& \ColorOf^{S'} = \{ be \mapsto red, nl \mapsto blue, lux \mapsto blue \}
\end{align*}
\end{example}

% ==============================================================================

\subsection{ASP}
\label{sec:preasp}
% ASP is a declarative language with logic \emph{rules} as central building blocks, using the stable model semantics \cite{GELFOND00} to define its models.
% %Since this semantics stems from non-monotonic reasoning (NMR), ASP cannot only be used to reason about objective knowledge, but also about epistemic knowledge about the belief of a rational agent. Because the epistemic side of ASP is not relevant for this paper, 
% We restrict our attention to the subset of ASP used to express knowledge in the typical \emph{Generate-Define-Test} (GDT) paradigm (e.g., see \cite{L08What}).

% In general, an ASP \emph{program} over an (implied) vocabulary \(V\)
% consists of a set of rules of the form:
A normal logic program is a set of rules of the form:
\begin{equation}
\label{eq:hb}
  H \lif B_1, \ldots, B_m, not~ B_{m+1}, \ldots, not~ B_n.
\end{equation}
%Here, the \(l_i\) are classical negation literals (as opposed to the default negation {\tt not} that precedes the \(l_{m+1}, \ldots, not~ l_k\). Such a literal is an atom \(A_i\) or a strong negation \(\lnot A_i\).
%The disjunction \(l_1 \lor \ldots \lor l_n\) is called the \emph{head} of the rule, while the conjunction \(l_{n+1}, \ldots, not~ l_k\) is called the \emph{body}.
Here, \(H\) and all \(B_i\) are atoms.
%\jod{I guess these atoms are syntactically the same as those defined above? Semantically, the function symbols have a fixed interpretation, so they don't have the same meaning. I'll come back to this point when arguing to unnest \fod{} function symbols.}
Corresponding with definitions, \(H\) is called the \emph{head} of the rule, while the conjunction \(B_1, \ldots, not~ B_n\) is called the \emph{body}.
Both the rule head and body can be empty.
%forcing the body formula to be false or (some of) the head literals \(l_i\) to be true, respectively. 
A rule with empty head (= false) is called a \emph{constraint}, a rule with empty body (= true) a \emph{fact}.

The semantics of a program is defined in terms of its \emph{grounding} which is an equivalent program without any variables, so all atoms are \emph{ground atoms}. 
An interpretation $I$ is a set of ground atoms.
A rule of form (\ref{eq:hb}) is satisfied in $I$ if $H \in I$ whenever $B_1, \ldots, B_m \in I$ and $B_{m+1}, \ldots, B_n \not \in I$.
% An interpretation $I$ is a set of \emph{ground atoms} -- atoms which do not contain variables. A ground atom $A$ is considered to be true iff $A \in I$.
% The body of a \emph{ground} rule is true in an interpretation if \(B_1, \ldots, B_{m} \in I\) and \(B_{m+1}, \ldots, B_n \notin I\).
% The empty body of a ground fact is considered true, and the empty head of a constraint is considered false, in every interpretation.
% An interpretation is a model of a ground rule when, if the ground rule's body is true, then the head is true.
An interpretation is a model of a program if it is a model of each rule.
The reduct of a program \(P\) w.r.t. interpretation \(I\), denoted \(P^I\), contains the ground rule \(H \lif B_1, \ldots, B_m\) for each rule of form (\ref{eq:hb}) for which none of the atoms \(B_{m+1}, \ldots, B_n\) belong to \(I\). An interpretation \(I\) is a \emph{stable model} or \emph{answer set} of program \(P\) if it is a minimal model of \(P^I\).

The \aspcore{} language extends this basic formalism in a number of ways.
For instance, it includes \emph{choice rules} that can be used to express that a certain atom $H$ \emph{may} be true:
\begin{equation*}
  \{H\} \lif B_1, \ldots, B_m, not~ B_{m+1}, \ldots, not~ B_n.
\end{equation*}
Choice rules allow to generate a search space of candidate answer sets, from which the desired solutions can be filtered out with constraints.

A second extension is the cardinality aggregate, which, as in \fod{}, counts the size of the set of free variable instantiations for which a conjunction of atoms holds.
Throughout this paper, we make use of cardinality aggregates in body atoms, which have the form:
\[
\aspcard\{\vec{X} \colon B_1, \ldots, B_m, not~ B_{m+1}, \ldots, not~ B_n\} \bowtie t
\]
with ${\bowtie}~\in \{=,\neq,\leq,\geq,<,>\}$ and $t$ a simple term.
%\jod{The \fod{} aggregate has now been described as a term, but the ASP one still is a more restricted atom, which I've clarified above. This is what we want?} \kylian{I think so, yes}

\begin{example}
\label{ex:asp}
The following ASP program is the counterpart of the \fod{} graph coloring model expansion problem from the previous section:
\begin{align}
& \country(be).~\country(nl).~\country(lux). \nonumber{}\\ 
& \border(nl,be).~\border(be,lux).
~\ccolor(\mathit{red}).~\ccolor(\mathit{blue}). \label{ex:asp:facts}\\
\{\colorOf(C,X)\} \lif & \country(C), \ccolor(X). \label{ex:asp:generate} \\
\lif & \aspcard \{C,X\colon \colorOf(C,X),\ccolor(X)\}\neq 1, \country(C). \label{ex:asp:card}\\
\lif & \border(C_1,C_2), \colorOf(C_1,X), \colorOf(C_2,X). \label{ex:asp:constraint}\\
\symBorder(C_1,C_2) \lif & \border(C_1,C_2). \nonumber{}\\
\symBorder(C_1,C_2) \lif & \symBorder(C_2,C_1). \label{ex:asp:symborder}
\end{align}
\end{example}

% ==============================================================================

\section{Translation of \fod\ to ASP}
\label{sec:transl}

%In general, IDP allows the user to define multiple vocabularies, structures and theories, and to invoke different inference tasks on these logical objects by means of a \texttt{procedure} block. Our translation assumes that there is a single vocabulary \(V\), a single structure \(S\) over \(V\) and a single theory \(T\) over \(V\), with the goal of computing the model expansions \(MX(V,S,T)\) of \(V\)-structure \(S\) w.r.t.~\(T\) as defined by \emph{Definition \ref{def:mx}}.

In this section, we define a translation $\alpha$ from an \fod{} model expansion problem $M = MX(V,S,\theo)$ to an ASP program $\alpha(M)$.
The translation $\alpha$ consists of four components $\alpha_1, \alpha_2, \alpha_3$ and $\alpha_4$. $\alpha_1$ and $\alpha_2$ translate $V$ and $S$, respectively (discussed in Section~\ref{ssec:search_space}). $\alpha_3$ and $\alpha_4$ translate the \fod{} sentences and definitions that belong to $\theo$, respectively (Section~\ref{ssec:form} and \ref{ssec:def}). Prior to translation we normalize the specification in order to make it compatible for translation to ASP (Section~\ref{ssec:norm}).

% ------------------------------------------------------------------------------

\subsection{Normalization of an \fod{} specification}
\label{ssec:norm}
%\jod{@Kylian: check whether the normalization presented below is more or less what is done in FOLASP. E.g., I've added unnesting but removed prenex normal form (as it is no longer needed?)}
As a first step, we normalize the specification of the model expansion problem $MX(V,S,\theo)$.
%An example of the normalization is given in \emph{Example \ref{ex:normal}}, at the end of this subsection.
%
%\paragraph{}
Firstly, we convert all formulas in \(\theo\) to negation normal form (NNF)~\cite{ENDERTON02}, i.e., the Boolean operators are restricted to negation (\(\neg\)), conjunction (\(\land\)) and disjunction (\(\lor\)), %$t_1\neq t_2$ is converted to $\neg(t_1=t_2)$, \kylian{aangezien (t1=t2) een atoom is, hoeft voorgaande er niet expliciet bij, denk ik dan} \jod{het is omgekeerd: ik wil geen $t_1\neq t_2$ meer staan hebben in de theorie. Anders moet je dat geval overal behandelen.}
and the negation operator is only applied directly to atoms.
We also assume that the type $T$ of any variable $x$ is known, either by automated type derivation or by an explicit annotation, e.g., $\forall x [T] \colon \varphi$.

%We assume that all sentences in the theory are written in \emph{prenex normal form} (PNF) and have a type $T_i$ explicitly associated to each quantified variable $x_i$, i.e.:
%\[
%\Theta_1 x_1 [T_1]\colon \ldots\colon \Theta_n x_n [T_n]\colon \varphi

%\]
%where \(\Theta_i \in \{\forall, \exists\}\) and $\varphi$ is quantifier-free.
%\(\Theta_i x_i [T_i]\) are quantified variables \(x_i\) with \(\Theta_i \in \{\forall, \exists\}\) and \(\tau(x_i)= T_i\), and \(\varphi\) is quantifier-free.

%\noindent
%For example, the sentence in \ref{eq:sen} is normalized to \ref{eq:sen-norm}:
%\begin{flalign}
  %&\mathtt{\forall c_1 : ColorOf(c_1, red) \Rightarrow \forall c_2: \lnot(c_1\neq c_2 \land ColorOf(c_2, red)).} & \label{eq:sen} \\
  %&\mathtt{\ \ \ \ \forall c_1\ \forall c_2: \lnot ColorOf(c_1, red) \lor \lnot ColorOf(c_2, red).} & \label{eq:sen-norm}
  %&\mathtt{\ \ \ \ \forall c_1{:}\ \forall c_2{:}\ c_1 \neq c_2 \Rightarrow \lnot \exists x{:}\ ColorOf(c_1, x) \land ColorOf(c_2, x).} & \label{eq:sen} \\
   % &\mathtt{\ \ \ \   \forall c_1{:}\ \forall c_2{:}\ \forall x{:}\ c_1 = c_2 \lor \lnot ColorOf(c_1, x) \lor \lnot ColorOf(c_2, x).} & \label{eq:sen-norm} 
%\end{flalign}

%\paragraph{}
% Function symbols in ASP have a fixed interpretation.
% To allow a non-fixed interpretation, as in \fod{}, we replace function symbols by predicates.
In \fod{}, a function symbol can be interpreted by any function of the appropriate arity and type.
By contrast, in ASP, each function symbol $F$ is interpreted by the Herbrand function that maps each tuple of arguments $\vec{t}$ to the syntactic term $F(\vec{t})$.
Hence, we eliminate function symbols from the \fod{} specification.
For this, we first rewrite the theory such that function symbols only appear in atoms of the form $F(\vec{x})=y$ with $\vec{x}$ and $y$ simple terms.
This is done by recursively replacing a (negated) atom $(\lnot) A$ with subterm $F(\vec{t})$ by the NNF equivalent of
\begin{equation*}
    \forall x\colon F(\vec{t})=x \Rightarrow (\lnot) A[F(\vec{t})/x]
\end{equation*}
%\jod{If I understand correctly, it's this simple. We could also rewrite the implication as a disjunction to keep NNF.}
% In a first step, we recursively \emph{unnest} all function application terms $t$ of the form $F(\vec{t})$ that occur in an atom that is not a direct equality with a variable (i.e., $t=x$).
% \jod{The elaboration below could arguably be skipped, ideally by referring to some authorative source. Enderton?}
% If $A[t]$ is the atom with $t$ as argument, we replace it by formula (\ref{eq:pos}) if $A$ is not under negation, and by (\ref{eq:neg}) if $A$ is under negation\footnote{Reducing to 
% %PNF and 
% NNF afterwards as needed.}:
%   \begin{equation*}
%   \label{eq:pos}
%      \exists x\colon A[F(\vec{t})/x] \land F(\vec{t})=x
%   \text{~~~or~~~}
%      \forall x\colon F(\vec{t})=x \Rightarrow A[F(\vec{t})/x]
%   \end{equation*}
% where $A[t/x]$ is the atom $A$ with term $t$ substituted by variable $x$.
% The existentially quantified replacement is used when $A$ occurs under negation, otherwise the universally quantified one is used.
% \jod{Note that the negation would still exist around the transformed atom! So the existential quantification would actually become a universal one, as in the example.}

%Note that if \(F\) is a total function, both formulas are equivalent.

In a similar way, we \emph{unnest} cardinality terms, such that these only occur in a comparison atom with simple terms, i.e., $\#\{\vec{x} \colon \varphi(\vec{x})\} \bowtie y$ with $y$ a simple term.
After this, each term $t$ that is not simple occurs only in equality atoms $t=x$ (if $t$ is a function symbol application) or comparison atoms $t\bowtie x$ (if $t$ is a cardinality expression), with $x$ a simple term.

%For partial functions (i.e., if an $x$ such that \(F(t_1, \ldots, t_n) = x\) need not exist), this transformation has the effect of maximising the truth value of the formula in which the atom \(F(t_1, \ldots, t_n) = x\). This is precisely how the semantics of partial functions is defined in \fod \cite{xxx}.
% \jod{Hmm, I'm not sure if this level of detail (partial functions) is needed. Also, what happens with the negation in equation \ref{eq:neg}? Does it stay outside the unnested formula, or does it move with $A$ to the rhs of the implication? The current text keeps it outside, but I'm not 100\% sure this is correct.}
%
Each function symbol \(F/n\) is then transformed into a predicate \(P_F/{n{+}1}\) with the same typing, i.e., \(\tau(F) = (T_1,\ldots,T_n,T_{n+1}) = \tau(P_F)\). %\footnote{Note that $P_F$'s arity is $n+1$, where $F$'s arity was $n$.}.
We replace the atoms $F(\vec{x})=y$ in the theory with $P_F(\vec{x}, y)$.
We also add the constraint implied by using a function symbol, i.e., that each tuple of arguments has exactly one image, to the theory:
  \[
    \forall x_1 \ldots\ x_n\colon \#\{x_{n+1} \colon P_F(x_1, \ldots, x_n, x_{n+1})\}=1.
  \]
If the structure $S$ has an interpretation $F^S$, then we replace it by
\begin{equation*}
P_F^S=\{(a_1,\ldots,a_{n+1}) ~|~ (a_1,\ldots,a_{n})\mapsto a_{n+1} \in F^S\} \text{.}
\end{equation*}

Finally, after eliminating function symbols, we push negations through atoms of the form $t \bowtie x$ by adjusting the $\bowtie$ operator.
After this, the only negated atoms remaining have the form $\neg P(\vec{x})$ for some predicate $P$ and simple terms $\vec{x}$.

\begin{example}
\label{ex:normal}
%Consider again the following IDP specification for the map coloring problem for a simple map of the Benelux:
%\begin{multicols}{2}
%\begin{lstlisting}
%Vocabulary V {
%  type Country
%  type Color
%  Border(Country, Country)
%  ColorOf(Country): Color
%} 
%\end{lstlisting}
%~\newline
%\begin{lstlisting}
%Structure S:V {
%  Country = { be; nl; lux }
%  Color = { red; blue }
%  Border = { nl, be; be, lux }
%  ColorOf = { be $\mapsto$ red; nl $\mapsto$ blue; lux $\mapsto$ blue }
%}
%\end{lstlisting}
%\end{multicols}
%\vspace{-0.8cm}
%\begin{lstlisting}
%Theory T:V {
%  $\forall$c_1[Country]: $\forall$c_2[Country]: Border(c_1,c_2) $\Rightarrow$ ColorOf(c_1) $\neq$ ColorOf(c_2).
%}
%\end{lstlisting}

The normalized \fod{} graph coloring theory from Section~\ref{sec:preidp} is:
%\begin{multicols}{2}
%\begin{lstlisting}
%Vocabulary V {
%  type Country
%  type Color
%  Border(Country,Country)
%  SymBorder(Country,Country)
%  ColorOf(Country,Color)
%} 
%\end{lstlisting}
%~\newline
%\begin{lstlisting}
%Structure S:V' {
%  Country = { be; nl; lux }
%  Color = { red; blue }
%  Border = { nl,be; be,lux }
%}
%\end{lstlisting}
%\end{multicols}
%\vspace{-0.8cm}
\begin{align*}
\theo'\colon
& \forall x[\Country]\colon \#\{y[\Color] \colon \ColorOf(x,y)\}=1 \\
& \forall c_1[\Country]\colon \forall c_2[\Country]\colon \lnot \Border(c_1,c_2) \lor \\
& ~~~~ \forall x[\Color]\colon \lnot\ColorOf(c_1,x) \lor \lnot\ColorOf(c_2,x)\\
& \left\{ 
    \begin{array}{l}
    \forall c_1,c_2\colon \SymBorder(c_1,c_2) \defarrow \Border(c_1,c_2). \\
    \forall c_1,c_2\colon \SymBorder(c_1,c_2) \defarrow \SymBorder(c_2,c_1). \\
    \end{array}
  \right\}
\end{align*}
\end{example}

% ------------------------------------------------------------------------------

\subsection{Generating the search space}
\label{ssec:search_space}

The solutions to a model expansion problem $MX(V,S,\theo)$ are to be found among the set of all structures for $V$ that expand $S$ -- the \emph{search space}. In this section, we translate vocabulary $V$ and structure $S$ to generate precisely this search space.

Unlike \fod{}, \aspcore{} imposes strict naming conventions: variable names must start with a capital, while the names of all other kinds of symbols must start with a lower case letter. For an \fodot{} type, predicate, variable or domain element \(\sigma\), we denote by \(\asp{\sigma}\) a corresponding ASP symbol of the right kind.
%, i.e., for a variable \(x\),  \(\asp{x}\) is some identifier that starts with a capital (e.g., \(X\)), while for any other kind of symbol \(\sigma\),  \(\asp{\sigma}\) starts with a lower case letter. 
Naturally, we enforce that \(\asp{\sigma} \neq \asp{\sigma}'\) whenever \(\sigma \neq \sigma'\).

% ------------------------------------------------------------------------------

% A vocabulary \(V\) defines all types and predicates relevant to the problem specification. 
%All types and the predicates interpreted by the structure will be translated as part of the structure.
%The translation of a structure \(S\) over \(V\) already introduces the symbols interpreted by the structure.  
%The remaining predicates are translated as follows.

In a model expansion problem $M = MX(V,S,\theo)$, some of the symbols in $V$ are interpreted by $S$ (we denote these by $Voc(S)$), while others (i.e., $V\setminus Voc(S)$) are not. Because types must always be interpreted by $S$ and our normalization step transforms all function symbols into predicates, the uninterpreted vocabulary $V\setminus Voc(S)$ consists entirely of predicates.  

We translate each \(P \in V\setminus Voc(S)\) with associated typing \(\tau(P) = (T_1,\ldots,T_n)\)  into the following ASP choice rule \(\alpha_1(P)\):
\[
%\{\asp{P}(\asp{x_1},\ldots, \asp{x_n})\} \lif{}~ \asp{T_1}(\asp{x_1}),\ldots,\asp{T_n}(\asp{x_n}).
\{\asp{P}(X_1,\ldots, X_n)\} \lif \asp{T_1}(X_1),\ldots,\asp{T_n}(X_n).
\]
This provides a first component $\alpha_1(M)$ of our translation of the model expansion problem. The second component $\alpha_2(M)$ translates the structure $S$. 

% \begin{example} 
% \label{ex:vocab}
% Consider the normalized IDP specification in Example \ref{ex:normal}. Vocabulary \(V\) contains the symbols \(\text{Country}/1\), \(\text{Color}/1\), \(\text{Border}/2\) and \(\text{ColorOf}/2\). A problem instance of the map coloring problem interprets all symbols but \(\text{ColorOf}/2\), so the translation of \(V\) to ASP is the following:
% \begin{lstlisting}
%   {colorOf(X1,X2)} $\lif$country(X1), color(X2).
% \end{lstlisting}
% \end{example} 

% ------------------------------------------------------------------------------

%A structure \(S\) for a vocabulary \(V'\subseteq V\) consists of interpretations for all the types and some of the predicates in \(V'\). 
%
We translate an interpretation 
$P^S = \{ (a_1^1,\ldots, a_n^1), \ldots,   (a_1^m,\ldots, a_n^m)\}$ of an $n$-ary predicate $P$ 
into the following set \(\alpha_2(P^S)\) of \(m\) ASP facts:
\[
\asp{P}(\asp{a_1^1},\ldots, \asp{a_n^1}).
~~~ \ldots ~~~
\asp{P}(\asp{a_1^m},\ldots,\asp{a_n^m}).
\]
The interpretation of a type is translated as though it were a unary predicate. The second component $\alpha_2(MX(V,S,\theo))$ of our translation now consists of all  $\alpha_2(P^S)$ for which $P \in Voc(S)$. Together, $\alpha_1$ and $\alpha_2$ allow us to generate the correct search space in ASP, as the following theorem shows.

%Formally, the ASP program $\alpha(S,V) = \alpha(S) \cup \alpha(V\setminus Voc(S))$ is the union of $\alpha(P)$ for uninterpreted predicates $P \in V$ and $\alpha(\sigma^S)$ for interpretations $\sigma^S \in S$. $\alpha(S,V)$ has precisely all structures that extend $S$ to $V$ as its answer sets, i.e., $MX(V,S,\emptyset) = \mathit{AnswerSets}(\alpha(S,V))$.

%\joost{Met eventueel verwijzing naar bewijs in appendix gebaseerd op volgende sectie, als we dat doen.}

%\begin{example} 
%\label{ex:struc}
%Consider the IDP structure \(S\) mentioned in Example \ref{ex:normal}, containing an interpretation for symbols \(\text{Country}/1\), \(\text{Color}/1\) and \(\text{Border}/2\). The translation of \(S\) to ASP is the following:
%\begin{lstlisting}
%  country(be). 
%  country(nl). 
%  country(lux). 
%  color(red). 
%  color(blue). 
%  border(nl,be). 
%  border(be,lux).
%\end{lstlisting}
%\end{example}

% ------------------------------------------------------------------------------

%\subsection{The search space \jod{skip}}
%The solutions to a model expansion problem $MX(V,S,T)$ are all $V$-structures $S' \supseteq S$ such that $S' \models T$. 
%Our translation $\alpha(S) \cup \alpha(V\setminus Voc(S))$ is an ASP program that generates a search space consisting of precisely all  $V$-structures $S' \supseteq S$, as stated by the following theorem.
\begin{theorem}\label{th:space}
  For each structure $S$ for a subvocabulary \(Voc(S) \subseteq V\), if $A$ is the ASP program $\alpha_1(V\setminus Voc(S)) \cup \alpha_2(S))$, then $MX(V,S,\{\}) = AnswerSets(A)$. 
\end{theorem}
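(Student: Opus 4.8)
The plan is to make explicit the (standard) encoding that identifies a $V$-structure with a set of ground atoms, and then verify the two inclusions $MX(V,S,\{\})\subseteq AnswerSets(A)$ and $AnswerSets(A)\subseteq MX(V,S,\{\})$ by directly computing the reduct of $A := \alpha_1(V\setminus Voc(S))\cup\alpha_2(S)$. To each $V$-structure $S'$ I associate the interpretation $I_{S'}$ containing $\asp{T}(\asp{a})$ for every type $T$ and every $a\in T^{S'}$, and $\asp{P}(\asp{a_1},\ldots,\asp{a_n})$ for every predicate $P$ and every $(a_1,\ldots,a_n)\in P^{S'}$. After normalization every symbol of $V$ is a type or a predicate, and since the types partition $D$ each domain element $a$ occurs as the constant $\asp{a}$ in some fact of $\alpha_2(S)$; moreover $A$ contains no function symbols, so the Herbrand universe of $A$ is exactly $\{\asp{a}\mid a\in D\}$ and the grounding of $A$ is indexed by tuples over $D$. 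Together with the convention $\asp{\sigma}\neq\asp{\sigma}'$ for $\sigma\neq\sigma'$, this makes $S'\mapsto I_{S'}$ a bijection between the $V$-structures expanding $S$ (all of which share the domain $D$) and the candidate interpretations over the Herbrand base of $A$.

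The key computation is the reduct $A^I$. The members of $\alpha_2(S)$ are facts, hence lie in $A^I$ for every $I$. Each ground instance of a choice rule $\{\asp{P}(X_1,\ldots,X_n)\}\lif\asp{T_1}(X_1),\ldots,\asp{T_n}(X_n)$ has no negative body literals, and recalling that a choice rule $\{H\}\lif B$ contributes the definite rule $H\lif B$ to $A^I$ exactly when $H\in I$ (and nothing otherwise), $A^I$ is a definite program. Its least model can be read off in one round of rule application from the facts: it contains all type atoms and all $Voc(S)$-atoms of $S$, and in addition contains $\asp{P}(\asp{\vec a})$ precisely for those $\asp{P}(\asp{\vec a})\in I$ whose type guards $\asp{T_i}(\asp{a_i})$ are all present, i.e. $a_i\in T_i^S$ for all $i$. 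Since $A^I$ is definite, $I$ is an answer set of $A$ iff $I$ equals this least model.

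For completeness ($\subseteq$), take $S'\supseteq S$ and set $I:=I_{S'}$. Because $S'$ extends $S$ it agrees with $\alpha_2(S)$ on the types and on $Voc(S)$, and because $P^{S'}$ respects $\tau(P)$ every $\asp{P}(\asp{\vec a})\in I$ has its type guards satisfied; by the description above, the least model of $A^I$ is exactly $I$, so $I\in AnswerSets(A)$, and trivially $S'\models\{\}$, so $S'\in MX(V,S,\{\})$. For soundness ($\supseteq$), let $I$ be an answer set of $A$, i.e. the least model of $A^I$. The facts of $\alpha_2(S)$ force $I$ to include the encoding of $S$ on types and on $Voc(S)$; conversely, for any $\sigma\in Voc(S)$ the only rules of $A$ with a $\asp{\sigma}$-atom in the head are those facts, so $I$ contains no further such atom and thus restricts to exactly the encoding of $S$ on $Voc(S)$. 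For $P\in V\setminus Voc(S)$, every $\asp{P}(\asp{\vec a})\in I$ must be derived in the least model of $A^I$, which forces $a_i\in T_i^S$ for each $i$; hence $P^{S'}:=\{\vec a\mid\asp{P}(\asp{\vec a})\in I\}$ is a well-typed relation. Letting $S'$ agree with $S$ on $Voc(S)$ and interpret each $P\in V\setminus Voc(S)$ by this $P^{S'}$ yields a $V$-structure with $S'\supseteq S$ and $I_{S'}=I$, so $I\in MX(V,S,\{\})$.

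The main obstacle is not depth but care: one must pin down the semantics of choice rules in the reduct (so that a choice atom is ``self-supporting'' only when it actually belongs to $I$) and must ensure the Herbrand grounding of $A$ coincides exactly with the domain $D$ — without the latter, spurious constants would break the bijection between structures and answer sets. Once these are settled, everything reduces to the observation that $A^I$ is a definite program whose least model is obtained by a single pass over the facts $\alpha_2(S)$ and the surviving choice-rule instances.
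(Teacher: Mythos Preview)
Your proof is correct and follows essentially the same approach as the paper's: both argue that the facts $\alpha_2(S)$ fix the interpreted symbols while the choice rules in $\alpha_1(V\setminus Voc(S))$ permit exactly the well-typed interpretations of the remaining predicates. You simply carry this out with more technical care---explicitly computing the reduct, verifying both inclusions separately, and checking that the Herbrand universe of $A$ coincides with $D$---whereas the paper treats these points informally in a short paragraph.
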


Here, the equality between structures and answer sets is of course modulo a straightforward ``syntactic'' transformation: we can transform each structure $S$ to the answer set $f(S)$ that consists of all atoms $P(\vec{d})$ for which $\vec{d} \in P^S$. Because we consider a typed logic, in which each element of the domain of $S$ must belong to the interpretation $T^S$ of at least one type $T$, this transformation is an isomorphism, which we omit from our notation for simplicity.

\begin{proof}
  The set $MX(V,S,\{\})$  consists of all $V$-structures $S'$ that can be constructed by starting from the structure $S$ and then adding, for each predicate $P \in V\setminus Voc(S)$ with type $\tau(P) =  (T_1, \ldots, T_n)$, any set of tuples $\subseteq T_1^S\times \cdots\times T_n^S$ as interpretation $P^{S'}$ of $P$ in $S'$. For each predicate $P$ interpreted by $S$, $\alpha_2(S)$ contains precisely all facts $P(\vec{d})$ for which $\vec{d} \in P^S$. Moreover, $P$ does not appear in $\alpha_1(V \setminus Voc(S))$. This ensures that each answer set in $AnswerSets(A)$ contains precisely all atoms $P(\vec{d})$ for which $\vec{d} \in P^S$. For each predicate $P \in V\setminus Voc(S)$, $S'$ may have any set of tuples $\subseteq T_1^S\times \cdots\times T_n^S$ in its interpretation. The choice rules in $\alpha_2(S)$ ensure that precisely these tuples also make up the possible interpretations for $P$ in  $AnswerSets(A)$.
\end{proof}

% ------------------------------------------------------------------------------

\subsection{Translating formulas}
\label{ssec:form}

We now define a third component $\alpha_3(MX(V,S,\theo))$ of our translation to transform the formulas $\phi \in \theo$ to ASP. We start by the base case: the translation $\alpha_3(A)$ for an atom $A$. Due to our normalization step, the only atoms $A$ that appear in $\theo$ are of the form $P(\vec{x})$ (with $P/n$ a predicate and $\vec{x}$ a tuple of simple terms) or $t \bowtie x$ (with $t$ a simple term or a cardinality expression, $\bowtie$ a comparison operator, and $x$ a simple term).
%
% \kylian{$>>>$}
%
% Negation only appears directly in front of such an atom. 
% We translate a negated atom $\neg P(x_1,\ldots,x_n)$ to the ASP conjunction $\aspnot \asp{P}(\asp{x_1},\ldots, \asp{x_n}), \asp{T_1}(\asp{x_1}), \ldots, \asp{T_n}(\asp{x_n})$, where $\tau(P) = (T_1,\ldots,T_n)$. 
% Note that the type information that is implicit in the \fod{} typing of a predicate is added explicitly in our translation, by means of additional conjuncts $\asp{T_i}(X_i)$.
% With slight abuse of notation, we shorten such a conjunction of type atoms to $\typeguard{x}$.
% %
% We translate a negated atom $\lnot (x \bowtie y)$ to $\aspnot \asp{x}\bowtie\asp{y},\asp{T}(\asp{x}),\asp{T}(\asp{y})$.
% %
% We translate a negated atom $\lnot(\#\{\vec{x}[\vec{T}] \colon \varphi(\vec{x},\vec{y})\} \bowtie z)$ to
% \begin{equation}
% \label{aspcard}
% \aspnot \aspcard \{ \asp{\vec{x}} \colon \delta(\asp{\vec{x}},\asp{\vec{y}}), \typeguard{x} \} \bowtie \asp{z}, \typeguard{y}
% \end{equation}
% with $\delta$ a fresh auxiliary predicate representing the formula $\varphi$. 
% Hence, we also add the rule:
% \begin{equation}
% \label{deltaagg}
% \delta(\asp{\vec{x}},\asp{\vec{y}}) \lif \alpha_3(\varphi(\vec{x},\vec{y})), \typeguard{x}, \typeguard{y}.
% \end{equation}
% with $\alpha_3$ as further defined below.
%
% The translation of positive, unnegated atoms simply omits the $\aspnot$.
%
% \kylian{$===$}
%
The translation $\alpha_3(P(x_1,\ldots,x_n))$ of a predicate atom is the ASP conjunction
\begin{equation}
\label{eq:atom}
\asp{P}(\asp{x_1},\ldots, \asp{x_n}), \asp{T_1}(\asp{x_1}), \ldots, \asp{T_n}(\asp{x_n})
\end{equation}
with typing $\tau(P) = (T_1,\ldots,T_n).$
Hence, the type information implicit in the typing of a predicate is added explicitly by means of the additional conjuncts $\asp{T_i}(X_i)$.
With slight abuse of notation, we shorten such a conjunction of type atoms to $\typeguard{x}$.

With $x$ and $y$ simple terms, the translation $\alpha_3(x \bowtie y)$ is the ASP conjunction
\begin{equation}
\label{eq:bowt}
\asp{x}\bowtie\asp{y},\asp{T}(\asp{x}),\asp{T}(\asp{y})\text{.}
\end{equation}

The translation $\alpha_3(\#\{\vec{x}[\vec{T}] \colon \varphi(\vec{x},\vec{y})\} \bowtie z)$ of a normalized cardinality atom is
\begin{equation}
\label{aspcard}
\aspcard \{ \asp{\vec{x}} \colon \delta(\asp{\vec{x}},\asp{\vec{y}}), \typeguard{x} \} \bowtie \asp{z}, \typeguard{y}, \asp{T}(\asp{z})
\end{equation}
with $\delta$ a fresh auxiliary predicate representing the subformula $\varphi$. 
Hence, we also add the rule:
\begin{equation}
\label{deltaagg}
\delta(\asp{\vec{x}},\asp{\vec{y}}) \lif \alpha_3(\varphi(\vec{x},\vec{y})), \typeguard{x}, \typeguard{y}.
\end{equation}
with a recursive application of $\alpha_3$.

After normalization, a negation occurs only in literals of the form $\neg P(\vec{x})$, whose translation $\alpha_3(\neg P(\vec{x}))$ simply is
\begin{equation}
\label{aspneg}
\aspnot \asp{P}(\asp{\vec{x}}), \asp{T}(\asp{\vec{x}})\text{.}
\end{equation}
Note that the $\aspnot$ is only added to the first atom and not to the type atoms.

Having defined how each (negated) atom $(\lnot) A$ is translated into a corresponding ASP expression $\alpha_3((\lnot) A)$, we now inductively define how more complex formulas are translated.

The translation $\alpha_3(\varphi \wedge \psi)$ of a conjunction is the ASP conjunction $\alpha_3(\varphi),\alpha_3(\psi)$.

The translation $\alpha_3(\varphi(\vec{x}) \vee \psi(\vec{y}))$ of a disjunction is the ASP atom $\delta(\asp{\vec{x}},\asp{\vec{y}})$, with $\delta$ a fresh auxiliary predicate.
% After normalization, a formula is either an atom or a combination of formulas using standard inductive rules for \(\neg, \land, \lor, \forall, \exists\).
% \begin{align*}
% \alpha(\neg \varphi) &\rightsquigarrow \aspnot{}\alpha(\varphi)\\
% \alpha(\varphi \wedge \psi) &\rightsquigarrow \alpha(\varphi),\alpha(\psi)\\
% \alpha(\varphi(\vec{x}) \vee \psi(\vec{y})) &\rightsquigarrow \delta(\vec{x},\vec{y})
% \end{align*}
% where \(\delta(\vec{x},\vec{y})\) is a fresh auxiliary predicate $\delta$ applied to the free variables $\vec{x}$ in $\varphi$ and $\vec{y}$ in $\psi$.
Additionally, for each such auxiliary predicate, we add the following ASP rules:
%\kylian{\st{. Additionally, for each such auxiliary predicate, we add the following ASP rules} idem?}
\begin{align}\label{deltadisj}
\begin{gathered}[b]
  \delta(\asp{\vec{x}},\asp{\vec{y}}) \lif \alpha_3(\varphi(\vec{x})), \typeguard{x},\typeguard{y}.\\
  \delta(\asp{\vec{x}},\asp{\vec{y}}) \lif \alpha_3(\psi(\vec{y})), \typeguard{x},\typeguard{y}.
\end{gathered}
\end{align}
to ensure that $\delta$ indeed corresponds to the disjunction of $\varphi$ and $\psi$.

Since a variable that appears in the body of an ASP rule but not in its head is implicitly existentially quantified, the translation $\alpha_3(\exists x[T]\colon \varphi(x))$ of an existential quantification is the ASP conjunction $\alpha_3(\varphi(x)), \asp{T}(\asp{x})$.

The translation $\alpha_3(\forall x[T]\colon \varphi(x))$ of a universally quantified formula is the ASP cardinality atom $\alpha_3(\#\{x[T]\colon \varphi(x)\}) = n$, with $n = \lvert T ^S\rvert$ the number of elements in type $T$.
Note that, because of this step, the translation $\alpha_3$ not only depends on the theory $\theo$ of our model expansion problem, but also on the structure $S$. 
A first way to avoid this dependence is to translate $\forall x[T]\colon \varphi(x)$ as $\alpha_3(\#\{x[T]\colon \lnot \varphi(x)\}) = 0$.
However, this would introduce an additional negation, which might lead to the introduction of loops over negation in Section~\ref{ssec:def}. 
A second way introduces an aggregate term representing $n$, e.g., $\alpha_3(\#\{x[T]\colon \varphi(x)\} = \#\{x[T]: true\})$, but we expect this to be less efficient.
%\jod{Nog een alternatief: skolemiseer alle existenti\"ele kwantoren, en zet alle universele kwantoren aan de buitenkant, waarna je FO-clauses bekomt, die om te zetten zijn naar ASP-constraints. Dit werkt wel niet zomaar bij universele kwantoren genest in aggregaten.}

For any formula $\phi(\vec{x})$, we can now use the transformation $\alpha_3$ to define a fresh ASP symbol $\delta_\phi(\asp{\vec{x}})$ such that the set of all $\vec{x}$ for which $\phi({x})$ holds in the \fod{} theory $\theo$ coincides with the set of all $\asp{\vec{x}}$ for which $\delta_\phi(\asp{\vec{x}})$ holds in the ASP program. We do this by adding the following \emph{reification rule} $r_{\phi}$:
\[\delta_\phi(\asp{\vec{x}}) \lif \alpha_3(\phi(\vec{x})),\typeguard{x}.\]

\begin{theorem}\label{th:form}
Let $\phi$ be a formula in vocabulary $V$ and let $S$ be a structure for $V$. Consider the ASP program $R_\phi$ that consists of all reification rules $r_\psi$ for which $\psi$ is a subformula of $\phi$, together with all additional rules produced by the translation $\alpha_3$ (see Eq.~\ref{deltaagg} and Eq.~\ref{deltadisj}). Let $R_S = \alpha_2(S)$ be the translation of the structure $S$. Then $R_S \cup R_\phi$ has a unique answer set $\mathbb{A}$ and for each subformula $\psi$ the set of all $\vec{d}$ for which $S \models \psi(\vec{d})$ is equal to the set of all $\vec{d}$ for which $\delta_\psi(\vec{d}) \in \mathbb{A}$.
\end{theorem}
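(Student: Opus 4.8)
The plan is to prove both claims — existence and uniqueness of the answer set $\mathbb{A}$, and the characterization of each $\delta_\psi$ — simultaneously by structural induction on $\psi$, working from the leaves of the formula tree upwards. First I would observe that the rule set $R_\phi$ is \emph{stratified}: every defined symbol is either a type/predicate symbol of $V$ (fully determined by the facts in $R_S = \alpha_2(S)$, which contain no rules with nonempty body for these symbols by Theorem~\ref{th:space}'s argument) or a fresh auxiliary symbol $\delta_\psi$ for a subformula $\psi$. The only negation introduced by $\alpha_3$ is in Eq.~\ref{aspneg}, applied to a predicate symbol $P$ of $V$ whose extension is frozen by $R_S$; in particular no $\delta_\psi$ ever occurs under $\aspnot$, and the auxiliary symbols are ordered by the subformula relation, which is acyclic. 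Hence $R_S \cup R_\phi$ is a locally stratified program, so by the standard result it has a unique answer set $\mathbb{A}$, coinciding with its perfect/well-founded model. This disposes of the existence-and-uniqueness half; the remaining work is the correctness characterization.

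For the inductive characterization I would set up the claim: for every subformula $\psi(\vec{x})$ of $\phi$ and every tuple $\vec{d}$ of domain elements of the appropriate types, $\delta_\psi(\asp{\vec{d}}) \in \mathbb{A}$ iff $S \models \psi(\vec{d})$. The base cases handle atoms. For $\psi = P(\vec{x})$, the reification rule body is Eq.~\ref{eq:atom}, which fires exactly when $\asp{P}(\asp{\vec{d}})$ is in $\mathbb{A}$ (equivalently $\vec{d} \in P^S$, since $R_S$ fixes this by the Theorem~\ref{th:space} reasoning) \emph{and} each $\asp{d_i}$ satisfies its type guard $\asp{T_i}$ (which holds precisely because $S$ is a $V$-structure and the types partition the domain); well-typedness guarantees the two conditions agree. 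The cases $x \bowtie y$ (Eq.~\ref{eq:bowt}) and $\neg P(\vec{x})$ (Eq.~\ref{aspneg}) are analogous, using that stable-model negation-as-failure on the frozen predicate $\asp{P}$ behaves classically. For the cardinality atom $\#\{\vec{x}[\vec{T}] : \chi(\vec{x},\vec{y})\} \bowtie z$, I would invoke the induction hypothesis on the auxiliary predicate $\delta$ generated for $\chi$ via Eq.~\ref{deltaagg}: its extension in $\mathbb{A}$ (restricted to well-typed tuples) equals $\{(\vec{d},\vec{e}) : S \models \chi(\vec{d},\vec{e})\}$, so the ASP $\aspcard$ over $\delta$ with the type guard computes exactly $|\{\vec{d} : S \models \chi(\vec{d},\vec{e})\}|$, and the comparison with $\asp{z}$ matches the \fod{} semantics. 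The inductive steps for $\wedge$, $\vee$ (Eq.~\ref{deltadisj}), $\exists$ and $\forall$ then follow directly: conjunction is ASP rule-body conjunction; disjunction holds iff one of the two bodies of the $\delta$-rules fires, i.e.\ iff (by IH) one disjunct holds; $\exists x[T]$ corresponds to the implicit existential closure of the unbound body variable, ranging over $\asp{T}$; and $\forall x[T]\colon \chi(x)$ translates (by the definition of $\alpha_3$ on $\forall$) to $\#\{x[T] : \chi(x)\} = |T^S|$, which by the cardinality case and the IH on $\chi$ holds in $\mathbb{A}$ iff $\chi(d)$ holds for \emph{all} $d \in T^S$.

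The main obstacle I expect is not any single inductive step but the bookkeeping around the auxiliary predicates and their free variables: $\alpha_3$ on a disjunction $\varphi(\vec{x}) \vee \psi(\vec{y})$ and on a cardinality body introduces $\delta$ with argument tuples that mix the free variables of the subformulas, and the type guards $\typeguard{x}$, $\typeguard{y}$ must be threaded consistently so that $\delta$ is only asserted on well-typed tuples and the counting in Eq.~\ref{aspcard}/\ref{deltaagg} ranges over exactly the intended instantiations. Making the induction hypothesis strong enough to carry this — stating it for \emph{all} the intermediate $\delta$'s (not just the reification symbols $\delta_\psi$) and explicitly restricting to well-typed tuples — is the delicate part; once the hypothesis is phrased correctly, each case is a routine unfolding of the stratified program's unique model.
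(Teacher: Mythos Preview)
Your proposal is correct and follows essentially the same approach as the paper's own proof: establish uniqueness of the answer set via stratification (the paper says ``$R_\phi$ is a strictly stratified set of rules with non-empty heads''; you spell out that the only $\aspnot$ lands on frozen $V$-predicates and the auxiliary $\delta$'s are layered by the subformula order), then run a structural induction over subformulas, handling atoms and comparisons as base cases and appealing to the induction hypothesis for the fresh $\delta$'s introduced for aggregates and disjunctions. Your version is more explicit about the type-guard bookkeeping and about strengthening the hypothesis to cover the intermediate $\delta$'s of Eq.~\ref{deltaagg} and Eq.~\ref{deltadisj}, but this is elaboration rather than a different route.
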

\begin{proof}
  $R_S$ is a set of facts over $V$. $R_\phi$ is a strictly stratified set of rules with non-empty heads. Therefore, it is clear that the answer set of $R_S \cup R_\phi$ is indeed unique. We now prove the theorem by induction over the subformula order. The base cases are atoms as translated in Eq.~\ref{eq:atom} and Eq.~\ref{eq:bowt}. Here, it is obvious from the translation that the correspondence holds. For an aggregate (Eq.~\ref{deltaagg}), we can apply the induction hypothesis to obtain a correspondence between the tuples for which $\varphi$ in the original aggregate holds and the tuples for which the fresh predicate $\delta$ in its translation holds; from this, the result follows. Similarly, the case for disjunction follows from applying the induction hypothesis to the fresh predicates in Eq.~\ref{deltadisj}. The cases for negation, conjunction and existential quantification are trivial. The case for universal quantification follows immediately from the correctness of the translation of aggregates.
\end{proof}

Once we have the reification rules $r_\phi$ as defined above, we can eliminate answer sets in which the formula $\phi$ is not satisfied by adding a constraint $\lif \aspnot \delta_\phi$. Denoting such a constraint by $C_\phi$, the third component of our translation -- the translation of sentences -- now is \[ \alpha_3(MX(V,S,\theo)) = \{r_\phi\mid \phi \in \theo\} \cup R \cup \{C_\phi \mid \phi \in \theo\},\] where the rules $r_\phi$ and constraints $C_\phi$ are as above and $R$ are all of the additional rules (see Eq.~\ref{deltaagg} and Eq.~\ref{deltadisj}) generated by producing the $r_\phi$.

\begin{theorem}
  Let $M$ be a model expansion problem $MX(V,S,\theo)$ in which $\theo$ is a set of FO sentences. The solutions to $M$ coincide with the answer sets of $\alpha_1(M) \cup \alpha_2(M) \cup \alpha_3(M)$.
\end{theorem}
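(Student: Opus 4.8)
The plan is to combine Theorem~\ref{th:space} and Theorem~\ref{th:form} by a ``split and merge'' argument on answer sets. First I would observe that the ASP program $\alpha_1(M) \cup \alpha_2(M) \cup \alpha_3(M)$ decomposes naturally into three layers: the choice rules and facts $\alpha_1(M) \cup \alpha_2(M)$ that generate the search space over $V$; the reification rules $R \cup \{r_\phi \mid \phi \in \theo\}$ that merely compute auxiliary predicates $\delta_\psi$ deterministically from the structure; and the constraints $\{C_\phi \mid \phi \in \theo\}$ that filter. The key structural fact is that every rule in $R \cup \{r_\phi\}$ has a head containing only a fresh auxiliary predicate $\delta_\psi$, and these predicates occur neither in $\alpha_1(M)$ nor in $\alpha_2(M)$; moreover the $\delta$-predicates are stratified (the dependency on the original $V$-predicates is monotone, and there are no loops over negation by construction of the normalization and of $\alpha_3$). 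Hence adding the reification layer to any fixed answer set of $\alpha_1(M)\cup\alpha_2(M)$ extends it conservatively and uniquely, without destroying stability, and the constraints $C_\phi$ then simply delete those extended answer sets in which some $\delta_\phi$ is absent.

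Concretely, I would argue as follows. Let $\mathbb{B}$ be an answer set of $P := \alpha_1(M)\cup\alpha_2(M)\cup\alpha_3(M)$. Its restriction $\mathbb{B}_V$ to the $V$-predicates is an answer set of $\alpha_1(M)\cup\alpha_2(M)$ — this uses the splitting set theorem, with the $V$-predicates forming a splitting set, since no rule of $P$ with a $V$-predicate in the head depends on a $\delta$-predicate. By Theorem~\ref{th:space}, $\mathbb{B}_V = f(S')$ for some $S' \in MX(V,S,\{\})$, i.e.\ some $V$-structure expanding $S$. Now apply Theorem~\ref{th:form} with the structure $S'$ in place of $S$: the program $\alpha_2(S') \cup R_\phi$ — which, up to the facts describing $S'$, is exactly the top part of $P$ relative to $\mathbb{B}_V$ — has a unique answer set $\mathbb{A}$, and $\delta_\psi(\vec d)\in\mathbb{A}$ iff $S'\models\psi(\vec d)$ for every subformula $\psi$. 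By the splitting/stratification bottom-up construction, $\mathbb{B} = \mathbb{B}_V \cup (\mathbb{A}\setminus \mathbb{B}_V)$ is forced, so $\mathbb{B}$ is uniquely determined by $S'$, and it survives the constraints $C_\phi$ precisely when $\delta_\phi \in \mathbb{A}$ for every $\phi\in\theo$, i.e.\ when $S'\models\phi$ for every sentence $\phi\in\theo$, i.e.\ when $S'\models\theo$. Conversely, given any $S'\models\theo$ expanding $S$, running the same three layers produces a legitimate answer set of $P$ whose $V$-reduct is $f(S')$. This establishes the bijection between solutions of $M$ and answer sets of $P$.

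One subtlety I would make sure to address is that Theorem~\ref{th:form} is stated for a single formula $\phi$, whereas $\theo$ is a set of sentences; I would note that the reification rules and auxiliary predicates for distinct sentences are over disjoint fresh vocabularies (freshness can be assumed global), so the unique-answer-set and correspondence statements compose across all $\phi\in\theo$ simultaneously — the relevant program is just $R_S$ together with the union of all the $R_\phi$, which remains stratified. A second point is the interaction between normalization and the claimed correspondence: strictly speaking $\alpha_3$ is applied to the normalized theory $\theo'$, so I would invoke (or make explicit) that normalization is model-expansion-preserving, i.e.\ $MX(V,S,\theo) \cong MX(V',S',\theo')$ after the function-to-predicate elimination and unnesting, and that the extra functionality constraints introduced in normalization are themselves sentences handled by $\alpha_3$. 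The main obstacle is really the bookkeeping of the splitting argument — verifying cleanly that no cyclic or non-monotone dependency crosses between the $V$-layer and the $\delta$-layer, and that the universal-quantifier translation (which uses $|T^S|$, hence depends on $S$) is still sound when evaluated against the expanded structure $S'$ rather than $S$ — but since types are interpreted identically in $S$ and $S'$, this dependence is harmless; I expect no genuinely hard step beyond assembling these pieces carefully.
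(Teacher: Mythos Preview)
Your proposal is correct and complete; it takes a genuinely different route from the paper. The paper proceeds by induction on the number of sentences in $\theo$: the base case $\theo=\{\}$ is Theorem~\ref{th:space}, and the inductive step adds one sentence $\phi_n$ together with its reification rules and constraint, observing that all new rule heads are fresh so there is ``no interference'' with the program built so far, and then appeals to Theorem~\ref{th:form}. Your argument instead treats all sentences simultaneously and organizes the program into three layers (search space, reification, constraints), invoking the splitting set theorem explicitly to justify that an answer set of the whole program restricts to an answer set of $\alpha_1(M)\cup\alpha_2(M)$ and that the $\delta$-layer is then uniquely determined. What your approach buys is a more transparent justification of the ``no interference'' claim---the paper's proof leaves that step informal, whereas you name the mechanism (splitting/stratification) and check the side conditions (no $\delta$-atom in a $V$-head, no loops over negation). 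You also surface two subtleties the paper glosses over: the application of Theorem~\ref{th:form} to the expanded structure $S'$ rather than the input $S$, and the harmlessness of the $|T^S|$ dependence since types are fixed by $S$. The paper's induction, by contrast, is shorter and avoids citing the splitting theorem, at the cost of being less explicit about why freshness alone suffices.
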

\begin{proof}
 By induction on the size of $\theo$. The base case in which $\lvert \theo \rvert = 0$ and therefore $\theo = \{\}$ is covered by Theorem \ref{th:space}. Once the induction hypothesis gives us the correspondence between a theory $\theo$ of size $n-1$ and an ASP program $A_{n-1}$, we can add an additional formula $\phi_n$ and prove the correspondence between $\theo \cup \{\phi_n\}$ and $A_n = A_{n-1} \cup \{r_{\phi_n}, C_{\phi_n}\} \cup R$, with $R$ the additional rules for producing $\alpha_3(\phi_n)$. The atoms in the head of the new rules $\{r_{\phi_n}, C_{\phi_n}\} \cup R$ are all fresh atoms that do not appear in $A_{n-1}$. Therefore there can be no interference  between the new rules and the old ones, and the result follows from Theorem \ref{th:form}.
\end{proof}

We now have a translation for theories that consists entirely of FO sentences. The next section examines how we can extend this to \fod{} theories that contain also definitions.

\subsection{Translating definitions}
\label{ssec:def}

% \jod{old text which seems faulty: We start from the assumption that each definition \(\Delta\) defines a single concept, i.e., one and the same predicate symbol \(P\).}

In general, a theory in \fod{} can contain multiple definitions. However, it is well-known that each such theory can be transformed into a theory that contains just a single definition~\cite{VANGELDER91}. This involves merging the different definitions and possibly renaming predicates to avoid the introduction of new loops. The necessity for this renaming step can be seen by comparing the following two theories:
$\theo$ consists of two separate definitions (one defining $p$ in terms of $q$ and the other defining $q$ in terms of $p$) and $\theo'$, which consists of a single definition that jointly defines both $p$ and $q$:
%
% Each definition rule is transformed to an ASP rule by individually translating the head and body of the rule to the respective parts of the ASP rule. The types of the universially quantified variables are added to the body of the rule:
% \begin{align*}
% \alpha(\forall x_1 [T_1]: \ldots: \forall x_n [T_n] : P(t_1,\ldots,t_n) \defarrow \varphi_i.)\\ 
%     =\ \alpha(P(t_1,\ldots,t_n)) \defarrow \alpha(\varphi_i), T_1(x_1), \ldots, T_n(x_n).
% \end{align*}  
%
%A definition in \fod{} consists of a set of rules. An \fod{} theory can contain such multiple definitions, and it is not necessarily the case that definitions can simply be merged together without changing their semantics.
%When joining the rules of these definitions, one may introduce loops that were not present in the separate definitions, changing the set of models allowed under the well-founded semantics.
  \vspace{-1cm}
  \begin{multicols}{2}
    \begin{equation*}
    \label{eq:ex1}
    \theo = \left\{
      \begin{array}{c}
         \{ p \defarrow q. \} \\
         \{ q \defarrow p. \}
      \end{array}
      \right\}
    \end{equation*}  \break
    \begin{equation*}
    \label{eq:ex2}
    \theo' = \left\{
      \left\{
      \begin{array}{c}
         p \defarrow q. \\
         q \defarrow p.
      \end{array}
      \right\}
    \right\} 
    \end{equation*}
  \end{multicols}
  \noindent
The theory $\theo$ has two models, namely $\{\}$ and $\{p,q\}$, while $\theo'$ has $\{\}$ as its unique model. We therefore cannot simply merge the two definitions in $\theo$. The solution is to rename the predicates that are defined in (at least one of) these definitions, and then assert the equivalence between the old and the new predicates. 

More formally, for each definition $\Delta$, for each defined predicate $P$ in $\Delta$, we replace all occurrences of $P$ in $\Delta$ with a fresh unique predicate $P_\Delta$ and add the equivalence constraint $P \Leftrightarrow P_\Delta$.
Applying this merge procedure to $\theo$ yields the following theory:
\[
\theo'' = \left\{p \Leftrightarrow p', q \Leftrightarrow q', \left\{
      \begin{array}{c}
         p' \defarrow q. \\
         q' \defarrow p.
      \end{array}
      \right\}\right\}
\]
This avoids the introduction of additional loops and ensures $\theo''$ equivalent to the original $\theo$. 

We therefore from now on assume that the theory of the model expansion problem contains only a single definition $\Delta$. Each rule $r \in \Delta$ is of the form
\[ \forall x_1 [T_1]: \ldots: \forall x_n [T_n] : P(x_1,\ldots,x_n) \leftarrow \varphi.\]
We translate it to the following ASP rule $\alpha_4(r)$:
\[\asp{P}(\asp{x_1},\ldots,\asp{x_n})~\lif \alpha_3(\varphi), \asp{T_1}(\asp{x_1}), \ldots, \asp{T_n}(\asp{x_n}).\]
We then define $\alpha_4(\Delta)$ as $\{\alpha_4(r) \mid r \in \Delta\}$.

We now first show the correctness of this transformation in isolation, before combining it with previous results.
\begin{theorem}\label{th:def}
 Let $\Delta$ be a definition in vocabulary $V$ and let $S$ be a structure for $Open(\Delta)$, i.e, the set of all symbols in $V$ that do not appear in the head of any rule of $\Delta$. Then $\alpha_2(S) \cup \alpha_4(\Delta)$ has a unique answer set which coincides with the unique solution to $MX(V,S,\{\Delta\})$.
\end{theorem}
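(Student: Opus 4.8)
The plan is to route both sides of the equivalence through the well-founded semantics; unlike Theorem~\ref{th:form}, where the predicates in question were supplied as facts and the program was stratified, the defined predicates of $\Delta$ are genuinely recursive, so we need well-founded-model machinery rather than stratification. On the \fod{} side: validity of $\Delta$ means its parametrized well-founded model is two-valued for every interpretation of $Open(\Delta)$, hence $MX(V,S,\{\Delta\})$ has exactly one solution $S'$, namely the structure that agrees with $S$ on $Open(\Delta)$ and interprets each $P\in Def(\Delta)$ by its value in that total well-founded model. On the ASP side: $\alpha_2(S)$ is a set of facts over $Open(\Delta)$ only, and $\alpha_4(\Delta)$ — understood, exactly as $R_\phi$ in Theorem~\ref{th:form}, to include also the auxiliary rules of the forms~\ref{deltaagg} and~\ref{deltadisj} generated while forming each $\alpha_3(\varphi)$ — has in its heads only the predicates $\asp{P}$ with $P\in Def(\Delta)$ together with fresh auxiliary predicates $\delta$. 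So $\alpha_2(S)\cup\alpha_4(\Delta)$ is an ordinary ASP program (a normal logic program with body cardinality aggregates, as in Section~\ref{sec:preasp}). It therefore suffices to (i) show that its well-founded model is total and, projected onto the non-auxiliary vocabulary, equals $f(S')$, and then (ii) invoke the classical fact that a program whose well-founded model is total has exactly that model as its unique answer set~\cite{pods/GelderRS88} (see also~\cite{jelia/MarienGD04}).

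For step (i), the crucial observation about the auxiliary predicates is the same one used in the proof of Theorem~\ref{th:form}: every $\delta$ occurs only \emph{positively} in rule bodies, the only $\aspnot$ literals $\alpha_3$ introduces are the $\aspnot \asp{P}(\asp{\vec{x}})$ faithfully copying the literals $\neg P(\vec{x})$ already present in $\Delta$'s rule bodies, and the aggregate and universal-quantifier-as-count constructs it introduces are, thanks to the type guards on the counted variables, monotone exactly where the corresponding \fod{} construct is. Consequently, contracting each $\delta$ node into the subformula it names turns $\alpha_2(S)\cup\alpha_4(\Delta)$ into precisely the rule-by-rule normal-program encoding of $\Delta$ with $S$ supplying the open predicates — the encoding underlying the \fod{}/ASP correspondence results we build on~\cite{jelia/MarienGD04,DLTV12Tarskian} — for which the well-founded model of the program coincides, on $Def(\Delta)\cup Open(\Delta)$, with the parametrized well-founded model of $\Delta$ over $S$. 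Totality of the latter then forces totality of the former, and the remaining $\delta$ atoms are fixed two-valuedly on top, being positive names for subformulas evaluated in an already-two-valued interpretation.

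To turn this into a real proof I would make the well-founded-model correspondence precise by induction on the stages of the alternating fixpoint, showing that the lower/upper bounds on $Def(\Delta)$ at stage $k$ of the computation for $\alpha_2(S)\cup\alpha_4(\Delta)$ coincide with those at stage $k$ of the parametrized alternating fixpoint for $\Delta$ over $S$. The inductive step is driven by the three-valued analogue of the claim proved inductively in Theorem~\ref{th:form}: once the predicates occurring in a subformula $\psi$ have been pinned to a lower/upper pair, the rules defining $\delta_\psi$ pin $\delta_\psi$ to exactly the lower/upper bound on the truth of $\psi$ — and it is here that the base cases~\ref{eq:atom} and~\ref{eq:bowt}, the aggregate case~\ref{deltaagg}, the disjunction case~\ref{deltadisj}, and the universal-quantifier-as-count case get discharged. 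Once the two fixpoints march in lockstep their limits agree, step (i) follows, and the theorem's conclusion then follows from step (ii), modulo the by-now-routine business of projecting away the $\delta$ atoms and the type atoms and applying the syntactic transformation $f$.

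The step I expect to be the main obstacle is exactly this lockstep bookkeeping for the $\delta$ predicates inside the alternating fixpoint. A $\delta_\psi$ can be mutually (positively) recursive with a defined predicate $P$ — when $\psi$ is a subformula of $P$'s body that mentions $P$ — so the $\delta$ layer cannot simply be pre-evaluated and discarded; one must argue that its presence changes neither which atoms become founded nor which become unfounded at any stage, i.e., that eliminating the $\delta$ predicates by positive unfolding preserves the well-founded model. A secondary nuisance is checking that the \aspcore{} semantics of cardinality aggregates agrees with the \fod{} aggregate semantics used inside the parametrized well-founded model, so that the two alternating fixpoints really do agree step by step; and a last, minor point is pinning down precisely the sense in which the ASP answer set ``coincides with'' the \fod{} structure $S'$ once auxiliary and type atoms are projected away. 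Getting the first of these right is the real content of the proof; the remainder is the cited \fod{}/ASP correspondence together with the classical total-well-founded-model-equals-unique-answer-set theorem.
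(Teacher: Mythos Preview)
Your proposal is correct and follows essentially the same line as the paper's own proof: route the argument through the well-founded semantics, observe that $\alpha_3$ introduces no new negations (so $\alpha_4(\Delta)$ has no loops over negation that $\Delta$ lacked), conclude that the well-founded model of $\alpha_2(S)\cup\alpha_4(\Delta)$ is total, and invoke the classical fact that a total well-founded model is the unique answer set, with the actual correspondence of extensions delegated to the correctness of $\alpha_3$ (Theorem~\ref{th:form}).

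The only noteworthy difference is one of granularity. The paper's proof is a four-sentence sketch that simply asserts ``no additional loops over negation $\Rightarrow$ two-valued well-founded model'' and then waves at Theorem~\ref{th:form}; you, by contrast, spell out what that appeal really requires --- the three-valued, stage-by-stage analogue of the inductive argument in Theorem~\ref{th:form}, showing that positively unfolding the auxiliary $\delta$ predicates preserves the alternating fixpoint --- and you correctly flag this lockstep bookkeeping as the place where the genuine work hides. That is exactly the gap the paper's terse proof leaves implicit, so your version is not a different argument but a more honest accounting of the same one.
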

\begin{proof}
 A valid definition in \fod{} must be such that its well-founded model is always two-valued. Because the transformation from $\Delta$ to $\alpha_4(\Delta)$ introduces no additional loops over negation (in fact, it introduces no additional negations at all), the set of ASP rules $\alpha_4(\Delta)$ also has a two-valued well-founded model. It is well known that a two-valued well-founded model is also the unique stable model. Given this uniqueness result, the theorem now follows from the correctness of $\alpha_3$ (Theorem \ref{th:form}).
\end{proof}

Note that this theorem does not hold for structures $S$ that interpret some of the defined symbols of $\Delta$. Consider, for instance, the definition consisting only of the rule $p \defarrow{} \mathit{true}$ and the structure $S$ in which $p^S = false$. The problem $MX(V,S,\Delta)$ has no solutions, but $\alpha_2(S) = \{ \}$ and $\alpha_4(\Delta) = \{ p \lif \}$, which means that $\alpha_2(S) \cup \alpha_4(\Delta)$ has $\{p\}$ as an answer set.

For the same reason, we cannot simply combine $\alpha_4(\Delta)$ with the choice rules introduced by $\alpha_1$. To solve this problem, we will use the same renaming trick that we use to merge separate definitions.

\begin{definition}
Let $MX(V,S,\theo)$ be a model expansion problem in which the theory $\theo$ contains only a single definition $\Delta$, and let $\theo' = \theo \setminus \{\Delta\}$. Let $\Delta'$ be the result of replacing each defined predicate $P$ of $\Delta$ by a fresh predicate $P_\Delta$, and denote by $Eq$ the set of all equivalence constraints $\forall \vec{x}\colon P(\vec{x}) \Leftrightarrow P_\Delta(\vec{x})$ for defined predicates $P$. We define $\alpha(MX(V,S,\theo))$ as the following ASP program:
\[\alpha_1(MX(V,S,\cdot)) \cup \alpha_2(MX(V,S,\cdot)) \cup \alpha_3(MX(\cdot,S,\theo' \cup Eq)) \cup \alpha_4(MX(\cdot,S,\Delta')). \]
(For clarity, arguments have been replaced by $\cdot$ where they are irrelevant.)
\end{definition}

\begin{theorem}
For a model expansion problem $M = MX(V,S,\theo)$ in which the theory $\theo$ contains only a single definition $\Delta$, the solutions to $M$ coincide with the answer sets of $\alpha(M)$. 
\end{theorem}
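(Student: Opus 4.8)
The plan is to combine the three earlier correctness results — Theorem~\ref{th:space} for the search space, Theorem~\ref{th:form} (the correctness of $\alpha_3$), and Theorem~\ref{th:def} (the correctness of $\alpha_4$) — using the renaming trick to make them interact cleanly. The crucial observation is that after replacing each defined predicate $P$ of $\Delta$ by a fresh copy $P_\Delta$, the predicates $P_\Delta$ are \emph{not} in $V$, hence not in $Voc(S)$, and do not occur in $\theo' \cup Eq$ except through the equivalence constraints; conversely the original $P$'s now behave, from the point of view of $\Delta'$, like open predicates. This separation is exactly what is needed so that $\alpha_4(\Delta')$ can be analyzed in isolation via Theorem~\ref{th:def}, while $\alpha_1$, $\alpha_2$ and $\alpha_3(\theo' \cup Eq)$ handle everything else.

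Concretely, I would argue in two directions. First, soundness of the \fodot{} side: a solution $S'$ to $M = MX(V,S,\theo)$ gives the interpretation $P^{S'}$ for each defined predicate $P$; define $P_\Delta^{S''} := P^{S'}$ for the fresh predicates. Then $S''$ satisfies $\theo'$ (unchanged sentences), satisfies $Eq$ (by construction $P = P_\Delta$), and since $S' \models \Delta$ and $\Delta'$ is $\Delta$ with $P$ renamed to $P_\Delta$, also $S'' \models \Delta'$ relative to the open predicates of $\Delta'$, which now include the original $P$'s. So $S''$ is a solution to the renamed model expansion problem $MX(V \cup \{P_\Delta\}, S, (\theo' \cup Eq) \cup \{\Delta'\})$, and we may invoke the already-established FO-sentence theorem together with Theorem~\ref{th:def}. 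The converse direction reverses this: an answer set of $\alpha(M)$ restricts, via the reification machinery of Theorem~\ref{th:form} and the definitional correctness of Theorem~\ref{th:def}, to interpretations of $P$ and $P_\Delta$ that agree (forced by the translation of $Eq$), and collapsing the two copies yields a $V$-structure expanding $S$ that models $\theo' \cup \{\Delta\} = \theo$.

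The step that needs the most care — and the main obstacle — is verifying that the four translation components do not \emph{interfere} when unioned: that the fresh auxiliary predicates $\delta$, the reification predicates $\delta_\phi$, the choice-rule predicates from $\alpha_1$, and the renamed $P_\Delta$ from $\alpha_4$ all occupy disjoint name-spaces, and in particular that no new loop over negation is created when $\alpha_4(\Delta')$'s rules (which define $P_\Delta$) sit alongside $\alpha_3$'s translation of $Eq$ (which, written out, couples $P$ and $P_\Delta$ through rules with bodies mentioning both). Here the renaming is load-bearing in exactly the way the preceding discussion of $\theo$ versus $\theo'$ illustrated: because $\Delta'$ defines only the fresh $P_\Delta$'s and treats the original $P$'s as open, $\alpha_4(\Delta')$ has no negation and no loop through $P$, while $\alpha_1$'s choice rules generate the $P$'s freely and $Eq$ merely filters; the module-theoretic / splitting-set style argument then lets us compose the answer sets. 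I would make this precise by appealing to the splitting set theorem (or simply to the fact, already used implicitly in the proofs of Theorems~\ref{th:form} and the FO-sentence theorem, that rules with pairwise disjoint fresh head predicates cannot interfere), treating $\alpha_1 \cup \alpha_2 \cup \alpha_3(\theo' \cup Eq)$ as the ``upper'' part and $\alpha_4(\Delta')$ as a stratified extension on top of it whose heads are the fresh $P_\Delta$'s.

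Finally I would assemble the pieces: Theorem~\ref{th:space} plus the FO-sentence theorem establish that $\alpha_1(M) \cup \alpha_2(M) \cup \alpha_3(MX(\cdot,S,\theo'\cup Eq))$ has answer sets in bijection with the solutions of $MX(V\cup\{P_\Delta\},S,\theo'\cup Eq)$; adding $\alpha_4(\Delta')$ on top, which by Theorem~\ref{th:def} deterministically computes the correct $P_\Delta$ from its open predicates, restricts these answer sets to exactly those where $P_\Delta$ is the well-founded/stable interpretation forced by $\Delta'$; and the $Eq$ constraints then force $P = P_\Delta$, so that projecting away the $P_\Delta$'s recovers precisely the solutions to the original $M$. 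The only genuinely delicate point remaining is confirming that the renaming-plus-$Eq$ construction is semantics-preserving at the \fodot{} level — that $MX(V,S,\theo)$ and $MX(V\cup\{P_\Delta\},S,\theo'\cup Eq\cup\{\Delta'\})$ have the same solutions modulo the extra coordinates — which is the \fodot{} analogue of the $\theo \leadsto \theo''$ equivalence already argued informally in the text, and which I would state and prove as a small lemma before the main argument.
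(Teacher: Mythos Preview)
Your proposal is correct and follows essentially the same approach as the paper: combine the earlier correctness results for $\alpha_1,\alpha_2,\alpha_3$ (search space and FO sentences) with Theorem~\ref{th:def} for $\alpha_4$, and observe that the renaming of defined predicates to fresh $P_\Delta$'s prevents interference between the components. The paper's own proof is only three sentences and simply asserts the non-interference, whereas you spell out the splitting-set style justification and the semantics-preservation lemma for the renaming; this extra rigor is appropriate but does not constitute a different route.
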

\begin{proof}
Theorem \ref{th:form} already shows that all parts of the model expansion problem apart from the definition are correctly translated by $\alpha_1,\alpha_2$ and $\alpha_3$. Theorem \ref{th:def} shows that the definition $\Delta$ can be correctly translated by $\alpha_4(\Delta)$. The renaming of the defined predicates of $\Delta$ ensures that both can be combined without invalidating the correctness of either theorem.
\end{proof}

\subsection{Translating the graph coloring example}

%The translation of a model expansion problem \(MX(V,S,\theo)\) then simply is the union of ASP rules arising from the translation, after normalization, of uninterpreted predicates in $V$, interpretations in $S$, and sentences and definitions in $\theo$.

We now show how $MX(V,S,\theo')$, with $\theo'$ the normalized theory from Example~\ref{ex:normal}, can be translated to ASP. 
This translation consists of four parts -- $\alpha_1$, $\alpha_2$, $\alpha_3$, $\alpha_4$ -- which correspond to the translation of the vocabulary $V$, the structure $S$, the constraints in $\theo'$, and the definitions in $\theo'$, respectively.
%Example \ref{ex:translated} shows this translation where $V$, $S$, and $\theo'$ correspond to $\alpha_1$, $\alpha_2$, and $\alpha_{3,4}$ as described above, respectively.

%\begin{example}
%\label{ex:translated}
%The following is a corresponding translation $\alpha$ of %$MX(V,S,\theo')$:
%\begin{align*}
%\alpha(S)\colon
%&&& \country(be).~\country(nl).~\country(lux).\\ 
%&&& \border(nl,be).~\border(be,lux).
%~\ccolor(\mathit{red}).~\ccolor(\mathit{blue}). \\
%\alpha(V)\colon
%&& \{\colorOf(C,X)\} \lif & \country(C), \ccolor(X). \\
%&& \{\symBorder(C_1,C_2)\} \lif & \country(C_1), \country(C_2). \\
%&& \lif & \aspnot \delta_1. \\
%&& \delta_1 \lif & \{\delta_2(C) \colon \country(C)\}=3. \\
%&& \delta_2(C) \lif & \aspcard\{ C,X \colon \colorOf(C,X),\ccolor(X) \}=1, \country(C).\\
%\alpha(\theo)\colon
%&& \lif & \aspnot \delta_3. \\
%&& \delta_3 \lif & \{ \delta_4(C_1,C_2) \colon \country(C_1), %\country(C_2) \}=9. \\
%&& \delta_4(C_1,C_2) \lif & \aspnot \border(C_1,C_2), \country(C_1), \country(C_2). \\
%&& \delta_4(C_1,C_2) \lif & \colorOf(C_1,X), \aspnot \colorOf(C_2,X), \\
%&& & \ccolor(X), \country(C_1), \country(C_2). \\
%&& \symBorder_2(C_1,C_2) \lif & \border(C_1,C_2). \\
%&& \symBorder_2(C_1,C_2) \lif & \symBorder_2(C_2,C_1). \\
%&& \lif & \symBorder(C_1,C_2), \aspnot \symBorder_2(C_1,C_2). \\
%&& \lif & \aspnot \symBorder(C_1,C_2), \symBorder_2(C_1,C_2). \\
%\end{align*}
%\end{example}

\begin{example}
\label{ex:translated}
The following is a translation of $M = MX(V,S,\theo')$ from Example~\ref{ex:normal} to ASP:
\begin{align*}
\alpha_1(M) \colon
&& \{\colorOf(C,X)\} \lif & \country(C), \ccolor(X). \\
&& \{\symBorder(C_1,C_2)\} \lif & \country(C_1), \country(C_2). \\
&& \delta_2(C) \lif & \aspcard\{ C,X \colon \colorOf(C,X),\ccolor(X) \}=1, \country(C).\\
&& \delta_1 \lif & \aspcard\{C\colon \delta_2(C), \country(C)\}=3. \\
&& \lif & \aspnot \delta_1. \\
\alpha_2(M) \colon
&&& \country(be).~\country(nl).~\country(lux).\\ 
&&& \border(nl,be).~\border(be,lux).
~\ccolor(\mathit{red}).~\ccolor(\mathit{blue}). \\
\alpha_3(M) \colon
&&  \delta_5(C_1, C_2, X) \lif & \aspnot \colorOf(C_1,X), \country(C_1), \country(C_2), \ccolor(X). \\
&&  \delta_5(C_1, C_2, X) \lif & \aspnot \colorOf(C_2,X), \country(C_1), \country(C_2), \ccolor(X). \\
&&  \delta_4(C_1,C_2) \lif & \aspcard\{X\colon \delta_5(C_1,C_2,X), \ccolor(X)\} = 2, \\
&&& \country(C_1), \country(C_2). \\
&& \delta_4(C_1,C_2) \lif & \aspnot \border(C_1,C_2), \country(C_1), \country(C_2). \\
&& \delta_3 \lif & \aspcard\{C_1, C_2\colon \delta_4(C_1,C_2), \country(C_1), \country(C_2) \}=9. \\
&& \lif & \aspnot \delta_3. \\
\alpha_4(M) \colon
&& \symBorder_\Delta(C_1,C_2) \lif & \border(C_1,C_2). \\
&& \symBorder_\Delta(C_1,C_2) \lif & \symBorder_\Delta(C_2,C_1). \\
&& \lif & \symBorder(C_1,C_2), \aspnot \symBorder_\Delta(C_1,C_2). \\
&& \lif & \aspnot \symBorder(C_1,C_2), \symBorder_\Delta(C_1,C_2).
\end{align*}
\end{example}
% \kylian{Indien voldoende plaats, dan zouden we dit voorbeeld verder kunnen uitwerken. Bijvoorbeeld, meer uitleg geven over welke regels horen bij de zinnen of subformules uit Example~\ref{ex:normal}, en welke vertaaltechnieken we hebben toegepast.}
% \jod{Misschien ook uitleggen welke shortcuts we gebruikt hebben.}
% \kylian{ ik heb je voorbeeldje een beetje om zeep geholpen heb ik de indruk... (het jouwe staat nog hierboven in commentaar.)}

Example~\ref{ex:translated} and Example~\ref{ex:asp} are both ASP programs representing the same graph coloring problem.
$\alpha_1(M)$ in Example~\ref{ex:translated} corresponds to rules (\ref{ex:asp:generate}) and (\ref{ex:asp:card}) in Example~\ref{ex:asp}, $\alpha_2(M)$ is the same set of facts (\ref{ex:asp:facts}), $\alpha_3(M)$ corresponds to the constraint (\ref{ex:asp:constraint}), and $\alpha_4(M)$ corresponds to the rules (\ref{ex:asp:symborder}).

It is clear that the translation in Example~\ref{ex:translated} is a lot less succinct.
Firstly, the translation introduces a significant number of auxiliary predicates, both reification predicates $\delta_i$ and a renaming predicate $symBorder_\Delta$ for the defined predicate of the definition.
Secondly, the universal quantifications in the \fod{} specification lead to several cardinality aggregates not present in the original formulation.
Thirdly, the \fod{} implication that represents the graph coloring constraint is normalized into a nested disjunction (see Example~\ref{ex:normal}) and this leads to four translated rules, compared to the single rule (\ref{ex:asp:constraint}) in Example~\ref{ex:asp}.

\section{Implementation}

By implementing this translation, we created a new model expansion engine for \fod{}, called \folasp{}. It uses the syntax of the \idpthree{} system for its input and output, and uses \clingo{}~\cite{tplp/GebserKKS19} as back-end ASP solver.

In addition to the subset of \fod{} discussed in this paper, \folasp{} also supports minimum and maximum aggregates, arithmetic, function symbols in the head of a definition, partial interpretations, and partial functions. 
We thereby cover almost all language constructs supported by \idpthree{}, except for symbol overloading, chained (in)equalities and constructed types.
% \folasp{} is also a bit more restrictive in the way problems are specified, but not to a degree that it reduces expressivity.
%\jod{Een voorbeeld is het feit dat de objectieffunctie een constante in het vocabularium moet zijn. Zijn er nog voorbeelden? Is dit de moeite van het vermelden waard?}
% \kylian{ik denk dat dat het voornaamste voorbeeld is, ik denk ook nog aan dat de variabelenlijst in een aggregaat niet leeg mag zijn $\#\{\vec{x}:...\}$, maar dat doet ook niet echt iets af aan de expresiviteit} 
%
Besides the model expansion inference, \folasp{} also supports the optimization inference, which computes a model that minimizes the value of some integer objective function.

Where appropriate, \folasp{} uses the \fod{} type information to add ``guards'' of the form $type(X)$ for each variable $X$ to the bodies of the generated ASP rules. In other words, \folasp{} generates so-called \emph{safe} rules, which allows the resulting programs to be handled by ASP solvers such as \clingo{}.

\folasp{} is implemented in Python 3. Its source code is published on Gitlab.\footnote{\url{https://gitlab.com/EAVISE/folasp}}
% commit \gitcommit{1b3c074a6f26cb18331df06b86a56530359f9c37}
%
We tested the correctness of the implementation by checking that the solutions produced by \folasp{} are accepted as such by \idpthree{}, and that, for optimization problems, the optimal objective values produced by \idpthree{} and \folasp{} were in agreement. %\joost{Was er ook een check dat \folasp{} ook alle modellen vindt die IDP vindt?} 
%\jod{Nee, \folasp{} kan nog niet alle modellen enumereren. Misschien bedoel je eerder dat we checken dat \folasp{} de modellen uit de IDP-run aanvaardt. Die is er nog niet. Ik ga zien of ik die makkelijk kan toevoegen.}

%Additionally, we used \idpthree{} to verify that the models returned by \folasp{} on the instances of the fourth ASP competition indeed were valid models, and that these were optimal in case of optimization.
%To validate the correctness of FOLASP’s implementation, we matched its answers with those of IDP on a wide range of problems, including those of the model-and-solve track of the fourthA SP competition (Alviano et al. 2013). Additionally, we usedIDPto verify that the modelsreturned byFOLASPon the instances of the fourth ASP competition indeed were valid models,and that these were optimal in case of optimization.

\section{Experiments}
\label{sec:result}

In our experiments, we evaluate \folasp{} (commit \gitcommit{82ec7edc651f14e35790354acb63e4754b319195} on the development branch) using \clingo{} (version 5.4.0) as backend ASP solver.
This configuration is compared to two other approaches. The first comparison approach runs \idpthree{} (commit \gitcommit{4be3c797716a87d76e3eed3f08fb7921680b9972}) on the same \fodot{} specifications as taken as input by \folasp{}.
The second comparison runs \clingo{} (again version 5.4.0) on native ASP encodings of the same problems.

As benchmark set, we use the problem instances from the \emph{model-and-solve} track of the fourth ASP competition~\cite{conf/lpnmr/AlvianoCCDDIKKOPPRRSSSWX13}. Both IDP and \clingo{} participated in this competition, which means that we have---for these same problems---both \fod{} and ASP specifications already available, written by experts in both languages. As such, we believe these benchmarks provide a good opportunity for a fair comparison.

We used the scripts and \fodot{} specifications from the \idpthree{} submission to generate \fodot{} instances that are accepted by both \idpthree{} and \folasp{}.
For the ASP system \clingo{}, we used the native ASP encodings provided by the organizers of the fifth ASP competition~\cite{journals/ai/CalimeriGMR16} (which uses the same problem set) since \clingo{} could not parse the specifications from the fourth ASP competition.
%After dropping the \emph{strategic companies} problem family, as its complexity is beyond NP, fourteen problem families remain, with each (close to) thirty instances.
%\joost{Over strategic companies  moeten we toch nog iets meer over zeggen, denk ik. Waarom betekent > NP dat we de benchmark zomaar schrappen? Een poging:}

\idpthree{} solves problems in NP. This covers all problems in the benchmark set, apart from the \instance{strategic companies} problem, which has a higher complexity. In the competition itself, the \idpthree{} team therefore solved this problem using a separate script to generate an exponentially sized search space, which was then given to \idpthree{}. Because this trick is not representative of how \idpthree{} is intended to be used in the real world, we decided to omit this benchmark from our experiments. Fourteen benchmark families remained with each (close to) thirty instances.
They cover a wide range of applications, from a simple reachability query over a transportation planning problem to optimizing the location of valves in an urban hydraulic network.

The experimental hardware consisted of a dual-socket Intel\textregistered{} Xeon\textregistered{} E5-2698 system with 512 GiB of RAM memory, with twenty hyper-threaded cores for each of the two processors.
To reduce resource competition, we run only twenty instances simultaneously, using twenty threads, for a total of ten per processor.
We employ a high memory limit of 64 GiB for each instance, as we observed that \clingo{} requires a significant amount of memory to solve instances generated by \folasp{}.\footnote{Note that even though twenty simultaneous instances utilizing 64 GiB of memory each is more than the total of 512 GiB of memory available, this worst-case scenario did not occur in practice and the machine did not run out of memory.}
We run the instances with a 6000 second timeout limit, but, to avoid imprecision at the timeout limit, we consider an instance unsolved if it takes more than 5000 seconds.
Optimization instances are considered solved when the last solution is proven to be optimal.

Runnable software, instance files, and detailed experimental results are made available at Zenodo.\footnote{\url{https://doi.org/10.5281/zenodo.4771774}}

\begin{figure}
  \includegraphics[width=\textwidth]{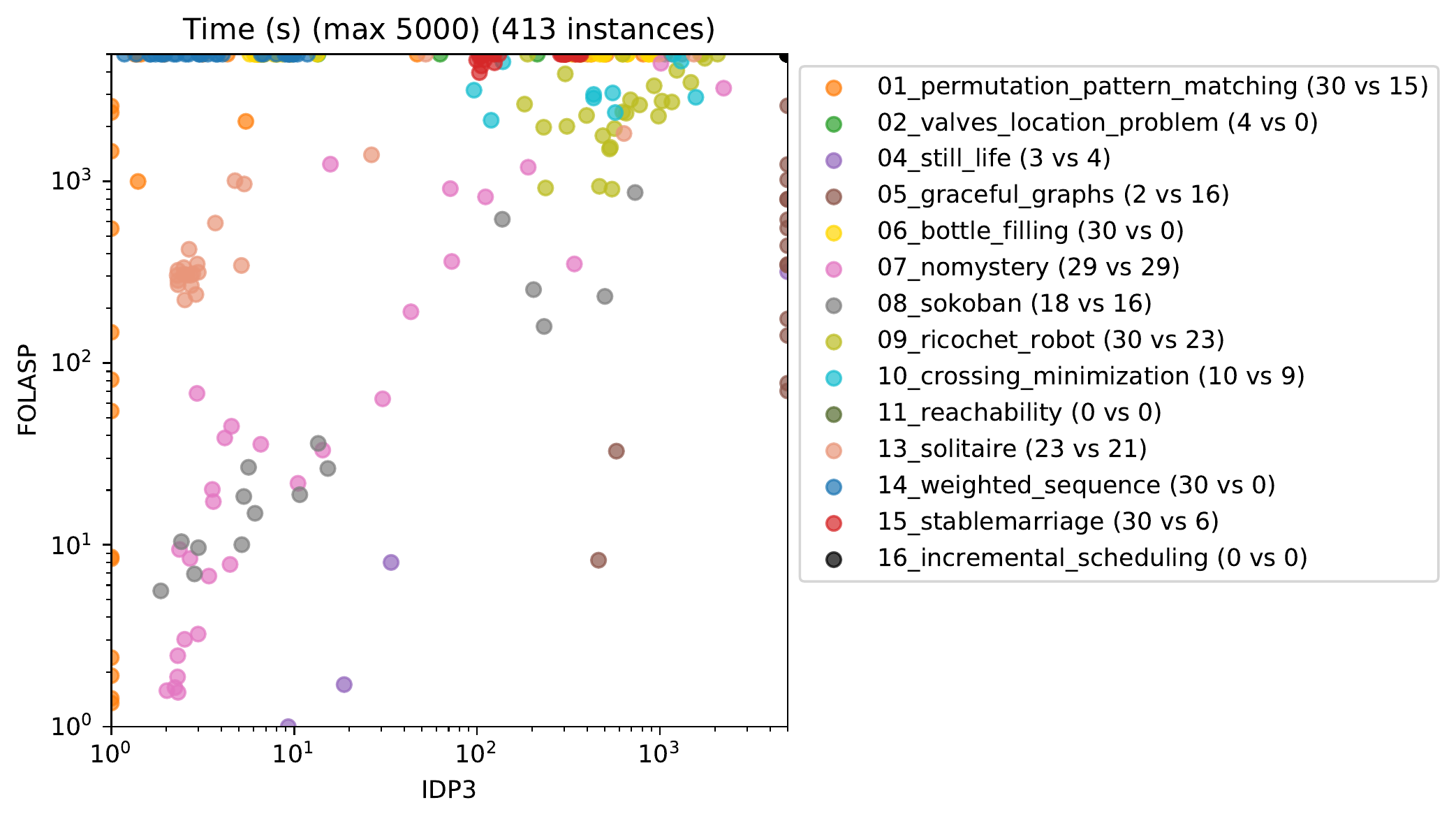}
  \caption{Scatter plot comparing \idpthree{} and \folasp{} runtime performance. ``($x$ vs $y$)'' denotes that \idpthree{} solved $x$ instances within the family, and \folasp{} $y$.}
  \label{fig:idpfolasptime}
\end{figure}

We compare the efficiency of the three approaches---\folasp{}, \idp{} and \clingo{}. 
Figure~\ref{fig:idpfolasptime} (best viewed in color) shows the time needed for both  \folasp{} and \idpthree{} to solve each instance. For benchmarks such as \instance{nomystery}, \instance{sokoban}, \instance{ricochet\_robot} and \instance{crossing\_minimization}, the performance of \folasp{} and \idp{} is about equal. For benchmarks such as \instance{permutation\_pattern\_matching}, \instance{valves\_location\_problem}, \instance{solitaire} and \instance{weighted\_sequence}, \idp{} clearly outperforms \folasp{}. This suggests that the specifications that were hand-crafted by the \idp{} team are indeed well-suited to this solver's particular characteristics, and less to those of \clingo{}. In addition, our translation of course introduces a number of artifacts, such as reification predicates, renaming predicates and cardinality aggregates, that may adversely impact performance as well. Interestingly, however, on the \instance{graceful\_graphs} benchmark, \folasp{} clearly outperforms \idpthree{}. 
Here, the inefficiencies of the translation are apparently overcome by the speed of \clasp{}. This highlights the usefulness of a translation such as ours: different benchmarks might be more suited for the architecture of different systems.

%It is clear that for most problem families, \folasp{} performs about equal (e.g., \instance{nomystery}) to vastly worse (e.g., \instance{weighted\_sequence}).
%This is not surprising: the used \fod{} specifications were hand-crafted to \idpthree{}'s particular performance characteristics, and the proposed translation technique introduces artificial and potentially inefficient constructs to translate idiosyncratic \fod{} features to ASP.

%Still, there is one instance family where \folasp{} clearly outperforms \idpthree{}: \instance{graceful\_graphs}.
%This highlights the utility of a translation between formalisms such as \fod{} and ASP: if one system cannot solve a problem, another might, using the same specification as input.

\begin{figure}
  \includegraphics[width=\textwidth]{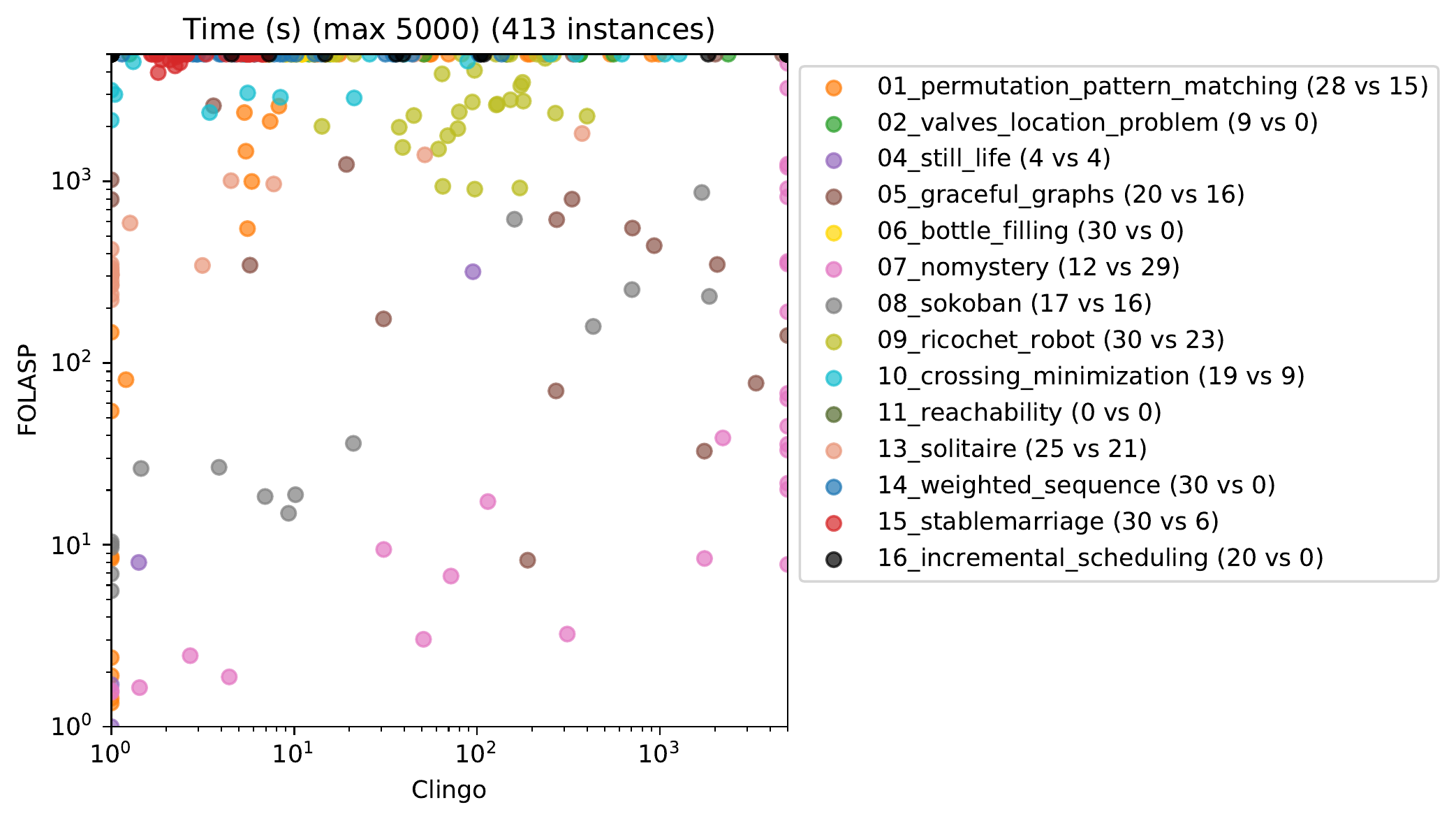}
  \caption{Scatter plot comparing \clingo{} and \folasp{} runtime performance. ``($x$ vs $y$)'' denotes that \clingo{} solved $x$ instances within the family, and \folasp{} $y$.}
  \label{fig:clingofolasptime}
\end{figure}

While the above experiments used different back-ends to handle precisely the same input, our next experiments (Figure~\ref{fig:clingofolasptime}) use the same \clingo{} back-end to solve the native ASP encodings as well as the translations that are automatically generated by \folasp{} from the \fod{} specification for the same benchmark. Here, we see that performance is about equal for \instance{sokoban}, \instance{still\_life} and \instance{graceful\_graphs}. For all other benchmarks apart from \instance{nomystery}, the native ASP specification significantly outperforms the automatic translation. This further confirms our earlier remark that the native version is able to better take advantage of the particular properties of \clasp{}, and that our translation's performance may suffer from the introduction of artificial predicates. 
%The performance gap is even more pronounced here: only on \instance{still\_life}, \instance{nomystery} and  is \folasp{} not significantly worse.

In these experiments, the \instance{nomystery} benchmark is the odd one out, since the \folasp{} translation here significantly outperforms the native encodings.
One possible explanation for this is that the modeling style encouraged by \fod{} has computational properties different to those of typical ASP programs, and that the \fod{} style happens to be particularly well-suited to \instance{nomystery}. This would again point towards the value of a translation such as ours, but now from a different perspective: it is not only useful to be able to try out different back-ends with the same specification, but it is also useful to be able to run the same back-end with specifications that were written according to different paradigms.

%\joost{Deze uitleg is nu helemaal niet duidelijk, vrees ik. We zouden hier wat exacter moeten kunnen zijn. De vraag is wellicht of die scripts iets van relevante berekening doen of niet? Indien niet, denk ik dat we deze uitleg kunnen weglaten. Indien wel, denk ik dat we best expliciet zeggen welke berekening er dan precies in de scripts gebeurt.
%However, one caveat is that \clingo{} and \folasp{} are not using the exact same encodings here: \folasp{}'s are generated based on the scripts and specifications from \idpthree{}'s submission to the ASP competition, but these involved some pre- en post-processing to convert from and to the \aspcore{} format.
%These processing steps are not incorporated in the \fod{} specifications passed to \folasp{}, so the encodings are not perfectly equivalent, even though they do represent the same problem description and particular instance.}

\begin{figure}
  \includegraphics[width=\textwidth]{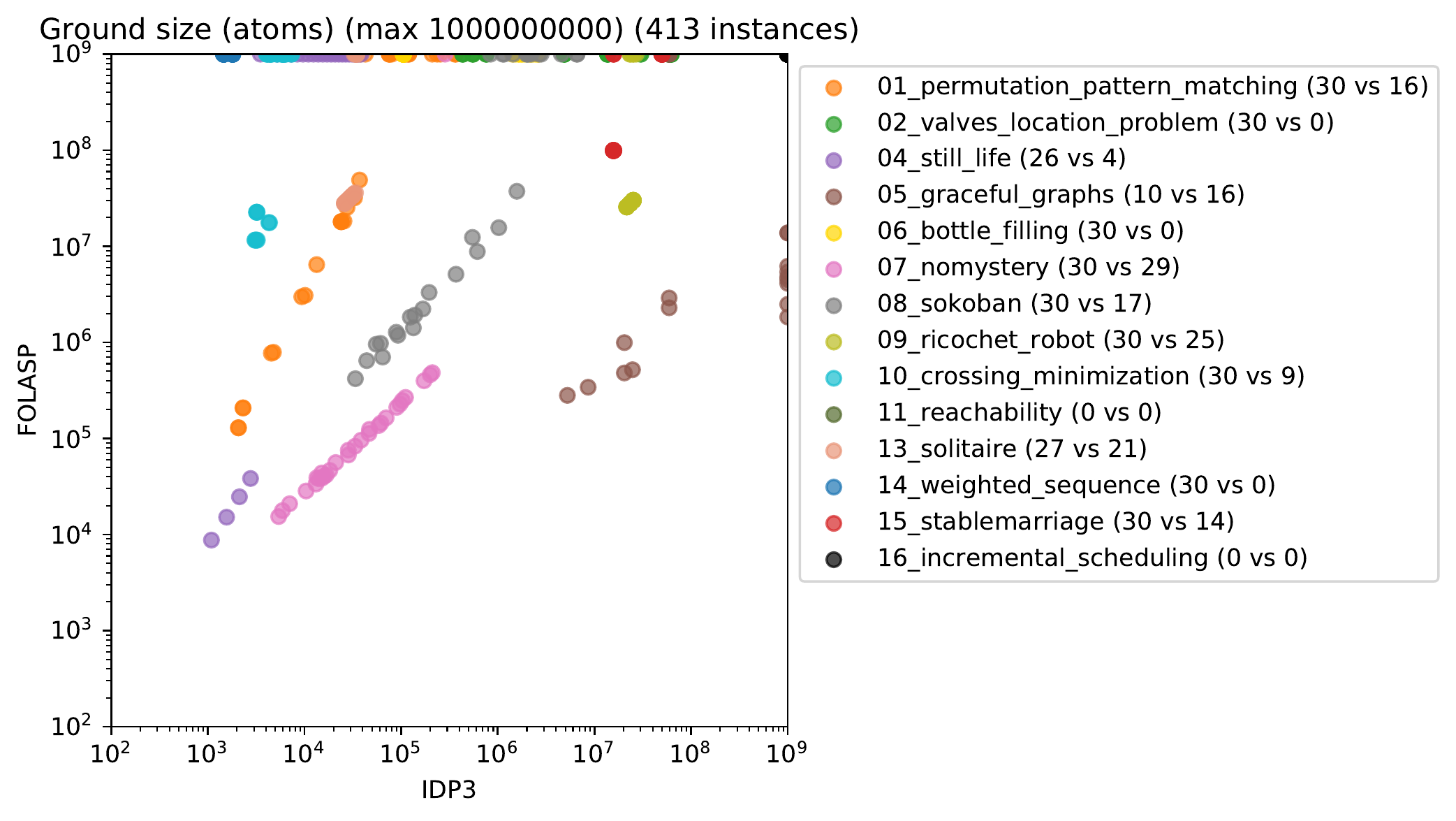}
  \caption{Scatter plot comparing \idpthree{} and \folasp{} ground size. ``($x$ vs $y$)'' denotes that \idpthree{} printed a ground size for $x$ instances, and \folasp{} for $y$.\label{fig:idpfolaspgrounding}}
\end{figure}

\begin{figure}
  \includegraphics[width=\textwidth]{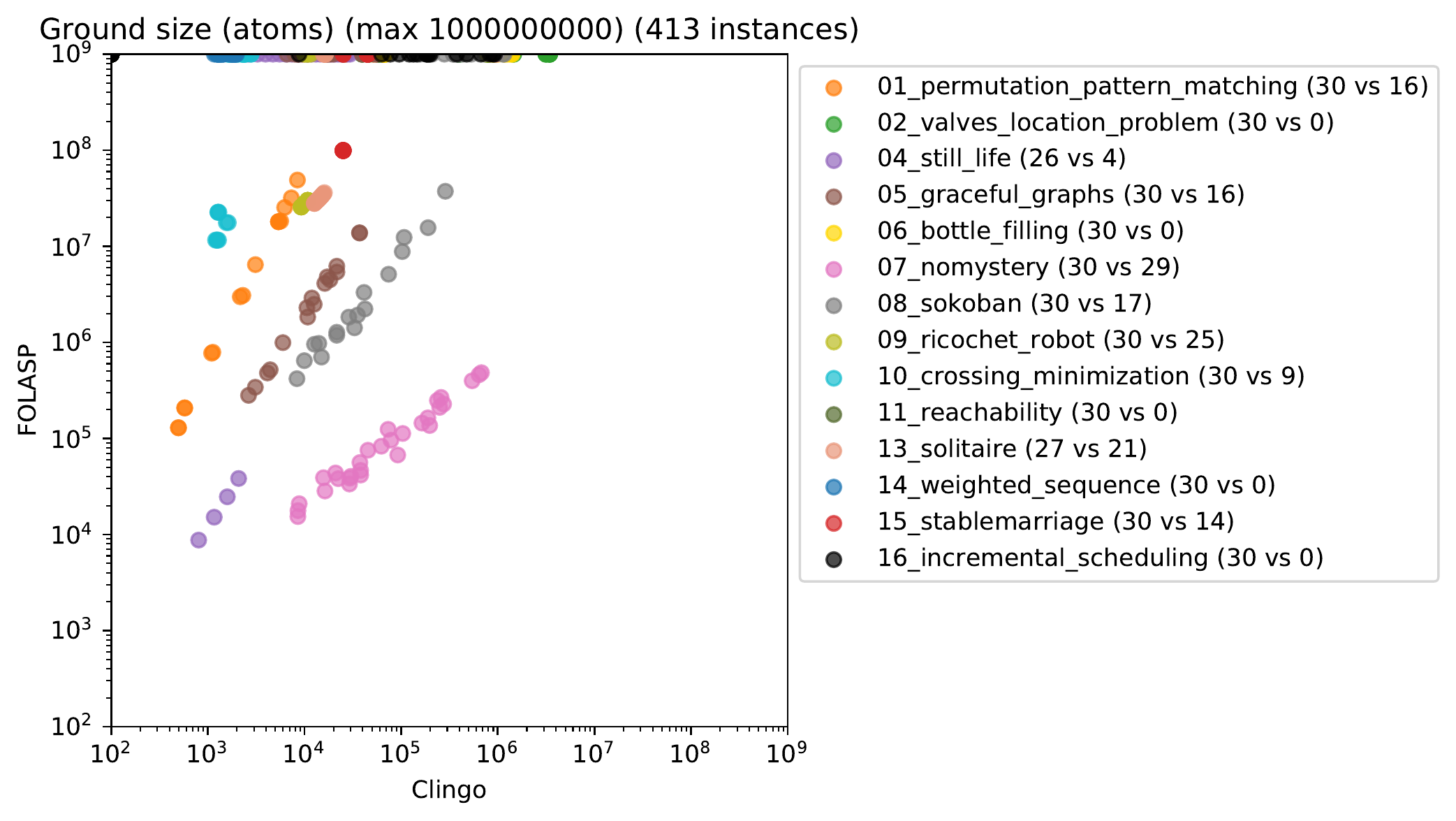}
  \caption{Scatter plot comparing \clingo{} and \folasp{} ground size. ``($x$ vs $y$)'' denotes that \clingo{} printed a ground size for $x$ instances, and \folasp{} for $y$.
  \label{fig:clingofolaspgrounding}}
\end{figure}

In our discussion of the previous experiments, we have hypothesised that the artifacts of our translation may make \folasp{}'s specifications harder to solve. To further investigate this, Figures~\ref{fig:idpfolaspgrounding} and~\ref{fig:clingofolaspgrounding} compare the size of the ground programs for the three approaches.
This ground size is measured as the number of atoms in the ground program, for both the ASP-solving approaches (\folasp{} and \clingo{}) and \idpthree{}, with the caveat that a ground ASP program and a ground \fod{} specification may still be quite different. For instance, \idpthree{} retains non-Boolean \emph{CP variables}~\cite{Cat13modelexpansion} in its grounding, though we did switch off \emph{lazy grounding}~\cite{jair/CatDBS15}.

For most benchmarks, the ground size for \folasp{} is indeed significantly larger than the ground size for both \idpthree{} and \clingo{}. %have a smaller ground size than , which often has a ground size of an order of magnitude or more.
%We attribute this to the many auxiliary predicates and rules introduced in our proposed translation.
Moreover, the ground size seems to correlate roughly with performance.
For instance, \folasp{} outperforms \idpthree{} and \clasp{} on \instance{graceful\_graphs} and \instance{nomystery}, respectively, and also has the  smaller ground sizes on these benchmarks.
On other benchmarks, such as \instance{bottle\_filling} or \instance{weighted\_sequence}, \folasp{} actually hit the 64 GiB memory limit during grounding.

These observations appear to confirm our hypothesis that artifacts introduced by the translation, such as auxiliary predicates, cardinality aggregates and extra rules, are a main source of poor performance. Future work may focus on how to tweak the translation such that the ground size can be reduced.

%Hence, we hypothesize that this increase in ground size is a significant contributor to the performance difference between \folasp{} on the one hand and \idpthree{} and \clingo{} on the other.

% ==============================================================================

\section{Conclusion}
\label{sec:conclude}
To solve real-world problems using declarative methods, both a suitable modeling language and a suitable solver are needed. The Answer Set Programming community has converged on the \aspcore{} standard as a common modeling language. However, while such a common language is a great driver for technological progress, it is not necessarily well-suited for all applications.

The \fod{} language may provide an interesting alternative. It builds on classical first-order logic, which may make it easier to use for domain experts who are already familiar with FO, and which may make it easier to integrate with other FO-based languages. However, it is only supported by a few solvers, which restricts the applications for which \fod{} can be used in practice.

In this paper, we aim to provide more flexibility:  by presenting a translation of \fod{} model expansion problems to \aspcore{}, we both extend the range of solvers for \fod{} and enable the use of \fod{} as an alternative modeling language for these solvers. In this way, we stimulate technological progress in solver development and in the development of applications.
We implemented our approach in the \folasp{} tool, which, to the best of our knowledge, is the first tool to offer a full translation from \fod{} to ASP for both model expansion and optimization.

In our experimental evaluation, we used benchmarks from the ASP competition to verify that the results computed by \folasp{} are indeed correct. We also compared the performance of running \clingo{} on the \folasp{} translation of an \fod{} specification to two alternatives:
\begin{itemize}
    \item directly running the \idpthree{} system on the \fod{} specification;
    \item running \clasp{} directly on a native ASP specification.
\end{itemize}
In general, our experiments confirmed what one would typically expect, namely that the best performance is obtained by running a specification that was native to a particular solver on that solver. However, the experiments also showed that, for a number of benchmarks, our translation-based approach is actually able to match or even, in rare cases, outperform the native approaches. This demonstrates the usefulness of our translation also from a computational perspective: a specification that performs poorly with one solver, may be more efficient when translated to the input language of another solver. 

Our experiments also demonstrate that, in cases where the translation performs significantly worse than the native solutions, the grounding size often appears to play an important role. Future work will therefore focus on further optimising the translation to reduce the overhead it introduces.
%in the form of auxiliary symbols, either by not generating them or by pruning them in a post-processing step.
% \kylian{Vertaling van universele kwantificatie dmv. clauses ipv kardinaliteiten vermelden als future work?}

In summary, the main contribution of our work is to provide increased flexibility, both in choice of specification language and in choice of solver. We believe that this will be useful to drive technological progress, to develop real-world applications using the best tools for the job, and to allow cross-fertilisation between different research groups.

%compared to the two state-of-the-art systems \idpthree{} and \clingo{} for \fod{} and \aspcore, respectively, which showed that, in general, \folasp{}'s performance is not up to par.
%However, the performance on two problem families indicates that \folasp{} may very well prove a useful tool for particular problems, as well as to study varying performance between \fod{} and ASP encodings.

%As future work, we will investigate the encountered performance gap.
%On the one hand, we aim to improve the translation so that the resulting program has a smaller grounding.
%On the other, one could investigate how to preprocess the obtained translation to eliminate the introduced overhead, which could prove useful to ASP solving technology in general.

% ==============================================================================

\bibliographystyle{acmtrans}
\bibliography{krrlib,folasp}

\begin{thebibliography}{}

\bibitem[\protect\citeauthoryear{Aavani}{Aavani}{2014}]{phd/Aavani14}
{\sc Aavani, A.} 2014.
\newblock Enfragmo: A system for grounding extended first-order logic to {SAT}.
\newblock Ph.D. thesis, Faculty of Applied Sciences, Simon Fraser University,
  Vancouver, Canada.

\bibitem[\protect\citeauthoryear{Alviano, Calimeri, Charwat, Dao-Tran, Dodaro,
  Ianni, Krennwallner, Kronegger, Oetsch, Pfandler, P{\"u}hrer, Redl, Ricca,
  Schneider, Schwengerer, Spendier, Wallner, and Xiao}{Alviano
  et~al\mbox{.}}{2013}]{conf/lpnmr/AlvianoCCDDIKKOPPRRSSSWX13}
{\sc Alviano, M.}, {\sc Calimeri, F.}, {\sc Charwat, G.}, {\sc Dao-Tran, M.},
  {\sc Dodaro, C.}, {\sc Ianni, G.}, {\sc Krennwallner, T.}, {\sc Kronegger,
  M.}, {\sc Oetsch, J.}, {\sc Pfandler, A.}, {\sc P{\"u}hrer, J.}, {\sc Redl,
  C.}, {\sc Ricca, F.}, {\sc Schneider, P.}, {\sc Schwengerer, M.}, {\sc
  Spendier, L.~K.}, {\sc Wallner, J.~P.}, {\sc and} {\sc Xiao, G.} 2013.
\newblock The fourth {A}nswer {S}et {P}rogramming competition: Preliminary
  report.
\newblock In {\em Logic Programming and Nonmonotonic Reasoning, 12th
  International Conference, {LPNMR} 2013, Corunna, Spain, September 15-19,
  2013. Proceedings}, {P.~Cabalar} {and} {T.~C. Son}, Eds. LNCS, vol. 8148.
  Springer, 42--53.

\bibitem[\protect\citeauthoryear{Calimeri, Faber, Gebser, Ianni, Kaminski,
  Krennwallner, Leone, Maratea, Ricca, and Schaub}{Calimeri
  et~al\mbox{.}}{2020}]{tplp/CalimeriFGIKKLM20}
{\sc Calimeri, F.}, {\sc Faber, W.}, {\sc Gebser, M.}, {\sc Ianni, G.}, {\sc
  Kaminski, R.}, {\sc Krennwallner, T.}, {\sc Leone, N.}, {\sc Maratea, M.},
  {\sc Ricca, F.}, {\sc and} {\sc Schaub, T.} 2020.
\newblock {ASP-Core-2} input language format.
\newblock {\em {TPLP}\/}~{\em 20,\/}~2, 294--309.

\bibitem[\protect\citeauthoryear{Calimeri, Gebser, Maratea, and Ricca}{Calimeri
  et~al\mbox{.}}{2016}]{journals/ai/CalimeriGMR16}
{\sc Calimeri, F.}, {\sc Gebser, M.}, {\sc Maratea, M.}, {\sc and} {\sc Ricca,
  F.} 2016.
\newblock Design and results of the fifth answer set programming competition.
\newblock {\em Artif. Intell.\/}~{\em 231}, 151--181.

\bibitem[\protect\citeauthoryear{{De Cat}, Bogaerts, Bruynooghe, Janssens, and
  Denecker}{{De Cat} et~al\mbox{.}}{2016}]{WarrenBook/DeCatBBD14}
{\sc {De Cat}, B.}, {\sc Bogaerts, B.}, {\sc Bruynooghe, M.}, {\sc Janssens,
  G.}, {\sc and} {\sc Denecker, M.} 2016.
\newblock Predicate logic as a modelling language: The {IDP} system.
\newblock {\em CoRR\/}~{\em abs/1401.6312v2}.

\bibitem[\protect\citeauthoryear{De~Cat, Bogaerts, Devriendt, and
  Denecker}{De~Cat et~al\mbox{.}}{2013}]{Cat13modelexpansion}
{\sc De~Cat, B.}, {\sc Bogaerts, B.}, {\sc Devriendt, J.}, {\sc and} {\sc
  Denecker, M.} 2013.
\newblock Model expansion in the presence of function symbols using constraint
  programming.
\newblock In {\em In ICTAI}. IEEE, 1068--1075.

\bibitem[\protect\citeauthoryear{{De Cat}, Denecker, Bruynooghe, and
  Stuckey}{{De Cat} et~al\mbox{.}}{2015}]{jair/CatDBS15}
{\sc {De Cat}, B.}, {\sc Denecker, M.}, {\sc Bruynooghe, M.}, {\sc and} {\sc
  Stuckey, P.~J.} 2015.
\newblock Lazy model expansion: Interleaving grounding with search.
\newblock {\em J. Artif. Intell. Res. {(JAIR)}\/}~{\em 52}, 235--286.

\bibitem[\protect\citeauthoryear{Denecker, Lierler, Truszczynsky, and
  Vennekens}{Denecker et~al\mbox{.}}{2012}]{DLTV12Tarskian}
{\sc Denecker, M.}, {\sc Lierler, Y.}, {\sc Truszczynsky, M.}, {\sc and} {\sc
  Vennekens, J.} 2012.
\newblock A {T}arskian informal semantics for answer set programming.
\newblock In {\em Technical Communications of the 28th International Conference
  on Logic Programming}. Vol.~17. Schloss Dagstuhl-Leibniz-Zentrum fuer
  Informatik, 227--289.

\bibitem[\protect\citeauthoryear{Denecker and Ternovska}{Denecker and
  Ternovska}{2008}]{tocl/DeneckerT08}
{\sc Denecker, M.} {\sc and} {\sc Ternovska, E.} 2008.
\newblock A logic of nonmonotone inductive definitions.
\newblock {\em ACM Trans. Comput. Log.\/}~{\em 9,\/}~2 (Apr.), 14:1--14:52.

\bibitem[\protect\citeauthoryear{Enderton}{Enderton}{2001}]{ENDERTON02}
{\sc Enderton, H.~B.} 2001.
\newblock Chapter two - first-order logic.
\newblock In {\em A Mathematical Introduction to Logic (Second Edition)\/},
  Second Edition ed., {H.~B. Enderton}, Ed. Academic Press, Boston, 67 -- 181.

\bibitem[\protect\citeauthoryear{Gebser, Kaminski, Kaufmann, Ostrowski, Schaub,
  and Wanko}{Gebser et~al\mbox{.}}{2016}]{iclp/GebserKKOSW16}
{\sc Gebser, M.}, {\sc Kaminski, R.}, {\sc Kaufmann, B.}, {\sc Ostrowski, M.},
  {\sc Schaub, T.}, {\sc and} {\sc Wanko, P.} 2016.
\newblock Theory solving made easy with {C}lingo 5.
\newblock In {\em Technical Communications of the 32nd International Conference
  on Logic Programming, {ICLP} 2016 TCs, October 16-21, 2016, New York City,
  {USA}}, {M.~Carro}, {A.~King}, {M.~De~Vos}, {and} {N.~Saeedloei}, Eds.
  OASIcs, vol.~52. Schloss Dagstuhl, 2:1--2:15.

\bibitem[\protect\citeauthoryear{Gebser, Kaminski, Kaufmann, and Schaub}{Gebser
  et~al\mbox{.}}{2019}]{tplp/GebserKKS19}
{\sc Gebser, M.}, {\sc Kaminski, R.}, {\sc Kaufmann, B.}, {\sc and} {\sc
  Schaub, T.} 2019.
\newblock Multi-shot {ASP} solving with clingo.
\newblock {\em {TPLP}\/}~{\em 19,\/}~1, 27--82.

\bibitem[\protect\citeauthoryear{Gebser, Maratea, and Ricca}{Gebser
  et~al\mbox{.}}{2020}]{tplp/GebserMR20}
{\sc Gebser, M.}, {\sc Maratea, M.}, {\sc and} {\sc Ricca, F.} 2020.
\newblock The seventh answer set programming competition: Design and results.
\newblock {\em Theory Pract. Log. Program.\/}~{\em 20,\/}~2, 176--204.

\bibitem[\protect\citeauthoryear{Leone, Pfeifer, Faber, Eiter, Gottlob, Perri,
  and Scarcello}{Leone et~al\mbox{.}}{2006}]{tocl/LeonePFEGPS06}
{\sc Leone, N.}, {\sc Pfeifer, G.}, {\sc Faber, W.}, {\sc Eiter, T.}, {\sc
  Gottlob, G.}, {\sc Perri, S.}, {\sc and} {\sc Scarcello, F.} 2006.
\newblock The {DLV} system for knowledge representation and reasoning.
\newblock {\em ACM Trans. Comput. Log.\/}~{\em 7,\/}~3, 499--562.

\bibitem[\protect\citeauthoryear{Marek and Truszczy{\'n}ski}{Marek and
  Truszczy{\'n}ski}{1999}]{marek99stable}
{\sc Marek, V.} {\sc and} {\sc Truszczy{\'n}ski, M.} 1999.
\newblock Stable models and an alternative logic programming paradigm.
\newblock In {\em The Logic Programming Paradigm: A 25-Year Perspective},
  {K.~R. Apt}, {V.~Marek}, {M.~Truszczy{\'n}ski}, {and} {D.~S. Warren}, Eds.
  Springer-Verlag, 375--398.

\bibitem[\protect\citeauthoryear{Mari{\"e}n, Gilis, and Denecker}{Mari{\"e}n
  et~al\mbox{.}}{2004}]{jelia/MarienGD04}
{\sc Mari{\"e}n, M.}, {\sc Gilis, D.}, {\sc and} {\sc Denecker, M.} 2004.
\newblock On the relation between {ID}-{L}ogic and answer set programming.
\newblock In {\em JELIA}, {J.~J. Alferes} {and} {J.~A. Leite}, Eds. LNCS, vol.
  3229. Springer, 108--120.

\bibitem[\protect\citeauthoryear{{Van Gelder}, Ross, and Schlipf}{{Van Gelder}
  et~al\mbox{.}}{1988}]{pods/GelderRS88}
{\sc {Van Gelder}, A.}, {\sc Ross, K.~A.}, {\sc and} {\sc Schlipf, J.~S.} 1988.
\newblock Unfounded sets and well-founded semantics for general logic programs.
\newblock In {\em PODS}. ACM, 221--230.

\bibitem[\protect\citeauthoryear{Van~Gelder, Ross, and Schlipf}{Van~Gelder
  et~al\mbox{.}}{1991}]{VANGELDER91}
{\sc Van~Gelder, A.}, {\sc Ross, K.~A.}, {\sc and} {\sc Schlipf, J.~S.} 1991.
\newblock The well-founded semantics for general logic programs.
\newblock {\em J. ACM\/}~{\em 38,\/}~3 (July), 619–649.

\end{thebibliography}

\end{document}